\documentclass{article}

\usepackage{PRIMEarxiv}

\usepackage[utf8]{inputenc}
\usepackage[T1]{fontenc}
\usepackage{url}
\usepackage{booktabs}
\usepackage{amsfonts}
\usepackage{microtype}
\usepackage{amsmath}
\usepackage{amsthm}
\usepackage{amssymb}   

\usepackage{algorithm}
\usepackage{algpseudocode}

\usepackage{bm}
\usepackage{caption}
\usepackage{subcaption}
\usepackage{graphicx}

\usepackage{float}

\usepackage{xcolor}
\usepackage{array}

\usepackage{mathtools}
\usepackage{enumitem}

\usepackage{hyperref}
\hypersetup{
	colorlinks=true,
	citecolor=black,
	linkcolor=black,
	filecolor=black,
	urlcolor=black,
}

\newtheorem{theorem}{Theorem}[section]
\newtheorem{definition}[theorem]{Definition}
\newtheorem{corollary}[theorem]{Corollary}
\newtheorem{proposition}{Proposition}
\newtheorem{remark}{Remark}

\numberwithin{equation}{section}

\AtBeginDocument{%
  \raggedbottom
  \setlength{\parskip}{3pt plus 1pt minus 1pt}
  \setlength{\abovedisplayskip}{6pt plus 2pt minus 4pt}
  \setlength{\belowdisplayskip}{6pt plus 2pt minus 4pt}
  \setlength{\abovedisplayshortskip}{0pt plus 2pt}
  \setlength{\belowdisplayshortskip}{4pt plus 2pt minus 2pt}
  \setlength{\floatsep}{8pt plus 2pt minus 4pt}
  \setlength{\textfloatsep}{10pt plus 2pt minus 4pt}
  \setlength{\intextsep}{8pt plus 2pt minus 4pt}
}

\title{Frequency Shift Physics-Informed Extreme Learning Machine for Solving High-Frequency Partial Differential Equations}

\author{Xiong Xiong ${}^{1,5}$, Ruonan Zhai ${}^{2}$, Zheng Zeng ${}^{3,5}$, Sheng Zhou ${}^{3,5}$,\\
Rongchun Hu ${}^{3,5}$\thanks{Corresponding author: rongchun\_hu$@$nwpu.edu.cn}, Zichen Deng ${}^{1,3,4,5}$\\
1. School of Mathematics and Statistics,
\\Northwestern Polytechnical University, Xi'an, 710072, China;\\
2. College of Science,
\\Shijiazhuang University, Shijiazhuang, 050035, China;\\
3. Department of Engineering Mechanics,
\\Northwestern Polytechnical University, Xi'an, 710072, China;\\
4. Department of Aeronautical Engineering,
\\Northwestern Polytechnical University, Xi'an, 710072, China;\\
5. MIIT Key Laboratory of Dynamics and Control of Complex Systems,\\
Northwestern Polytechnical University, Xi'an, 710072, China}

\begin{document}
\maketitle

\begin{abstract}
Solving partial differential equations (PDEs) with high-frequency solutions remains a central challenge in physics-informed machine learning due to spectral bias---the tendency of neural networks to learn low-frequency components preferentially. This paper proposes a Frequency Shift Physics-Informed Extreme Learning Machine (FS-PIELM) framework that addresses this limitation through an additive mechanism for weight initialization. Rather than multiplying random weights by a scaling factor, the method translates the mean of the Gaussian weight distribution while keeping the variance fixed at unity, thereby avoiding the variance amplification inherent in scaling-based methods. Two variants are developed: FS-PIELM-L assigns independent frequency magnitudes to individual neurons, while FS-PIELM-G groups neurons for improved robustness. Theoretical analysis shows that the frequency variance under the proposed framework remains bounded and approaches unity regardless of target frequency, in contrast to the quadratic growth of conventional approaches. The method preserves the computational efficiency of extreme learning machines, requiring only a single linear solve. Experiments on seven benchmark problems spanning six equation types---Helmholtz, wave, Poisson, Klein-Gordon, heat, and advection-diffusion---on both regular and complex geometries show that the linear variant achieves the best accuracy in six of seven cases, with improvements of one to nearly five orders of magnitude over existing PIELM variants.
The code and data accompanying this manuscript will be made publicly available at \url{https://github.com/xgxgnpu/Physics-informed-vibe-coding/tree/main/FS-PIELM}.
\end{abstract}

\textbf{Keywords}: Physics-informed machine learning; Extreme learning machine; High-frequency PDEs; Spectral bias; Helmholtz equation; Frequency shift

\section{Introduction}

The numerical solution of partial differential equations (PDEs) underpins much of computational science and engineering. Traditional mesh-based methods---finite element, finite difference, and spectral methods \cite{hughes2000finite,strikwerda2004finite,canuto2006spectral}---have matured over decades, yet they can become impractical for complex geometries, high-dimensional settings \cite{han2018solving}, or inverse problems where governing parameters must be inferred from data \cite{raissi2020hidden}. Physics-informed machine learning offers an alternative by embedding physical constraints directly into neural network training \cite{lagaris1998artificial,raissi2019a,e2018deep,sirignano2018dgm,karniadakis2021,ma2025closed}.

Physics-informed neural networks (PINNs) \cite{raissi2019a} approximate PDE solutions by training a neural network to satisfy the governing equations and boundary conditions at scattered collocation points. This mesh-free formulation handles irregular geometries naturally and can incorporate sparse measurement data. PINNs have been applied successfully across fluid mechanics \cite{cai2021physics,cai2021pinns}, solid mechanics \cite{haghighat2021physics}, and multiphysics problems \cite{jagtap2020extended}; see \cite{karniadakis2021,cuomo2022scientific,khanra2026review} for recent surveys. Two practical limitations, however, restrict the applicability of standard PINNs: the gradient-descent training can be expensive, often requiring hours even for moderately complex problems \cite{wang2021understanding,krishnapriyan2021characterizing}, and neural networks exhibit \emph{spectral bias}---a systematic tendency to learn low-frequency components first while high-frequency features converge much more slowly \cite{rahaman2019spectral,xu2019training,xu2025overview,cao2019towards,dominguez2016multilayer,dominguez2018deep}.

The spectral bias phenomenon has been rigorously analyzed within the framework of neural tangent kernel (NTK) theory \cite{jacot2018neural} by Wang et al. \cite{wang2022when}, who demonstrated that the eigenvalues of the NTK associated with high-frequency eigenfunctions are substantially smaller than those corresponding to low-frequency modes. This eigenvalue gap causes the network to converge exponentially slower for high-frequency components, resulting in poor approximation accuracy for PDEs whose solutions exhibit rapid oscillations or multi-scale behavior \cite{liu2020multi}. Several strategies have been proposed to mitigate spectral bias in physics-informed learning \cite{jagtap2020adaptive,mcclenny2023self,zhao2024novel}. Among the proposed remedies, Fourier feature mappings \cite{tancik2020fourier,wang2021eigenvector} lift the input coordinates into a high-dimensional sinusoidal space, enabling the network to represent rapid oscillations, while SIREN networks \cite{sitzmann2020implicit} achieve a similar effect through periodic activation functions. Both families of methods, however, introduce frequency-scale hyperparameters that require problem-specific calibration and may fail when the solution spans a wide spectral range. More recent remedies pursue adaptive spectral architectures, including separated-variable spectral networks (SV-SNN) with characteristic-frequency multi-level initialization \cite{xiong2025separated}, hierarchical adaptive Fourier feature encodings that allocate distinct frequency scales \cite{xiong2025high}, and physics-informed Kolmogorov--Arnold networks built on Jacobi orthogonal polynomial bases \cite{xiong2025j}.

An alternative approach to improving the efficiency of physics-informed learning is to replace deep neural networks with extreme learning machines (ELMs) \cite{huang2006extreme,huang2012extreme,bianchini1995learning,bianchini2014complexity}. The physics-informed extreme learning machine (PIELM), developed by Dwivedi and Srinivasan \cite{dwivedi2020physics}, leverages the ELM architecture consisting of a single hidden layer with randomly initialized and fixed input weights \cite{rahimi2007random,nelsen2021random,chen2022bridging}, where only the output layer weights are determined analytically through least-squares solution. This formulation transforms the training process from iterative gradient descent to a single matrix inversion, reducing computational time by several orders of magnitude \cite{dwivedi2020physics} while often achieving comparable or superior accuracy for smooth problems. The PIELM framework has since been extended to sharp-gradient elliptic problems \cite{calabro2021extreme}, biharmonic equations \cite{dwivedi2020biharmonic}, nonlinear PDEs \cite{fabiani2021numerical}, Stefan problems \cite{ren2025physics}, fracture mechanics \cite{zhu2025extended}, domain decomposition \cite{dong2021local,dwivedi2021distributed}, hidden-layer concatenation \cite{ni2023numerical}, and functional-connection architectures \cite{schiassi2021extreme,quan2023solving,dwivedi2022normal}; see \cite{yang2025pielm} for a comprehensive review. A related line of inquiry concerns random feature methods for PDEs, where approximation-theoretic foundations \cite{deryck2025approximation}, high-precision optimization \cite{chen2024optimization}, and extensions to time-dependent \cite{chen2023random}, multiscale \cite{wu2025randomized,liao2026solving,zhou2026ipirnns}, and complex-domain settings \cite{sun2025twolevel} have been developed.

Despite these computational advantages, the conventional PIELM framework remains subject to spectral bias. Because the input weights are drawn from a standard Gaussian distribution, the resulting basis functions concentrate their spectral energy near zero frequency, making it difficult to resolve highly oscillatory PDE solutions. To address this limitation, Ren et al. \cite{ren2025gffpielm} recently proposed the general Fourier feature PIELM (GFF-PIELM), which employs a cosine activation and multiplies the random pre-activation by a linearly varying frequency scaling factor. This multiplicative approach enables representation of higher frequencies while preserving ELM efficiency, but introduces an inherent coupling between the target frequency and the weight variance that limits precision at high wavenumbers, as we demonstrate formally in Section~\ref{sec:theory}.

We introduce a Frequency Shift Physics-Informed Extreme Learning Machine (FS-PIELM) that addresses spectral bias through a fundamentally different mechanism. Instead of multiplying the network inputs by a scaling factor as in GFF-PIELM, we shift the mean of the Gaussian distribution from which hidden layer weights are drawn. Each neuron is assigned a random direction on the unit sphere and a scalar mean magnitude that varies linearly across the neuron ensemble, so that different neurons target different frequency bands. Because only the mean is translated while the covariance matrix remains the identity, the variance of the effective frequency stays bounded regardless of the target frequency---in contrast to the quadratic variance growth inherent in scaling-based methods. Two variants are developed: FS-PIELM-L (Linear) assigns an independent mean magnitude to each neuron for fine frequency resolution, while FS-PIELM-G (Grouped) partitions neurons into groups sharing common mean magnitudes for improved stability.

In summary, the contributions of this work are threefold. First, we propose a frequency shift mechanism that controls the spectral properties of a single-hidden-layer ELM through the mean of the weight sampling distribution, rather than through input scaling, and we prove that the resulting frequency variance remains bounded regardless of target frequency. Second, we develop two complementary variants---FS-PIELM-L for fine-grained frequency resolution and FS-PIELM-G for improved robustness---and evaluate them on seven benchmark problems spanning elliptic, parabolic, and hyperbolic PDEs with both rectangular and irregular geometries. Third, systematic parameter studies show that FS-PIELM-L achieves the best accuracy in six of seven cases, outperforming SIREN-PIELM and GFF-PIELM by one to nearly five orders of magnitude.

The remainder of this paper is organized as follows. Section~\ref{sec:background} reviews PINNs, spectral bias, and the ELM framework including existing frequency-enhanced PIELM variants. Section~\ref{sec:proposed} presents the FS-PIELM framework: the core frequency shift mechanism, two architectural variants, and theoretical analysis. Section~\ref{sec:experiments} reports numerical experiments on seven benchmarks with detailed comparisons. Section~\ref{sec:conclusions} summarizes findings and discusses limitations and future directions.

\section{Background and Preliminaries}
\label{sec:background}

This section reviews the mathematical framework for physics-informed learning and the existing approaches that motivate our work. We first formulate the general problem and introduce the physics-informed neural network paradigm along with its spectral bias limitation, and then present the extreme learning machine framework that forms the foundation of our approach.

\subsection{Physics-Informed Learning and Spectral Bias}
\label{sec:pinns_spectral_bias}

Consider a general partial differential equation (PDE) defined on a bounded domain $\Omega \subset \mathbb{R}^d$ with boundary $\partial\Omega$:
\begin{align}
\mathcal{D}[u(\mathbf{x})] &= f(\mathbf{x}), \quad \mathbf{x} \in \Omega, \label{eq:pde}\\
\mathcal{B}[u(\mathbf{x})] &= g(\mathbf{x}), \quad \mathbf{x} \in \partial\Omega, \label{eq:bc}
\end{align}
where $u: \Omega \to \mathbb{R}$ denotes the unknown solution, $\mathcal{D}$ represents the differential operator characterizing the physical law, $\mathcal{B}$ denotes the boundary condition operator, and $f$ and $g$ are prescribed source and boundary data functions, respectively. Throughout this paper, we adopt the following notation: $\mathbf{x} = (x_1, \ldots, x_d)^{\mathrm{T}}$ denotes the spatial coordinate vector, $\|\cdot\|$ represents the Euclidean norm unless otherwise specified, $\mathcal{N}(\boldsymbol{\mu}, \boldsymbol{\Sigma})$ denotes the multivariate Gaussian distribution with mean $\boldsymbol{\mu}$ and covariance $\boldsymbol{\Sigma}$, and $\mathcal{U}(a, b)$ denotes the uniform distribution on the interval $[a, b]$.

Physics-informed neural networks (PINNs), introduced by Raissi et al. \cite{raissi2019a}, approximate the solution $u(\mathbf{x})$ using a deep neural network $u_\theta(\mathbf{x})$ parameterized by weights and biases $\theta$, optimizing a composite loss function that penalizes violations of both the governing equations and boundary conditions:
\begin{equation}
\mathcal{L}(\theta) = \frac{1}{N_C}\sum_{j=1}^{N_C}\left|\mathcal{D}[u_\theta(\mathbf{x}_j^C)] - f(\mathbf{x}_j^C)\right|^2 + \frac{\lambda}{N_B}\sum_{k=1}^{N_B}\left|\mathcal{B}[u_\theta(\mathbf{x}_k^B)] - g(\mathbf{x}_k^B)\right|^2,
\label{eq:pinn_loss}
\end{equation}
where $\{\mathbf{x}_j^C\}_{j=1}^{N_C}$ are interior collocation points, $\{\mathbf{x}_k^B\}_{k=1}^{N_B}$ are boundary points, and $\lambda$ is a weighting parameter. Derivatives are computed via automatic differentiation, and the mesh-free nature provides advantages for complex geometries \cite{berg2018unified}, high-dimensional spaces, or inverse problems \cite{karniadakis2021,cuomo2022scientific}.

However, PINNs suffer from \emph{spectral bias}: neural networks preferentially learn low-frequency components while high-frequency features converge much more slowly \cite{rahaman2019spectral,xu2019training,xu2025overview,cao2019towards}. This phenomenon has been rigorously analyzed through the neural tangent kernel (NTK) framework by Wang et al. \cite{wang2022when}, who demonstrated that the NTK eigenvalue spectrum decays rapidly with increasing frequency, creating an exponentially growing gap in convergence rates between frequency modes. Consequently, for PDEs with rapid oscillations or multi-scale behavior, standard PINNs require prohibitively long training times and often fail to achieve acceptable accuracy. Various remedies have been proposed, including Fourier feature mappings \cite{tancik2020fourier,wang2021eigenvector} and SIREN networks with periodic activations \cite{sitzmann2020implicit}, but these approaches introduce additional hyperparameters requiring careful tuning and may not generalize across problem classes.

\subsection{Extreme Learning Machine Framework}
\label{sec:elm_framework}

The physics-informed extreme learning machine (PIELM), developed by Dwivedi and Srinivasan \cite{dwivedi2020physics}, addresses the computational efficiency limitations of PINNs while preserving their mesh-free character. The extreme learning machine (ELM) architecture consists of a single hidden layer feedforward network where the input weights $\mathbf{W} = [\mathbf{w}_1, \ldots, \mathbf{w}_M] \in \mathbb{R}^{d \times M}$ and biases $\mathbf{b} = [b_1, \ldots, b_M]^{\mathrm{T}}$ are randomly initialized and remain fixed, while only the output weights $\boldsymbol{\beta} = [\beta_1, \ldots, \beta_M]^{\mathrm{T}}$ are learned \cite{huang2006extreme}. The network output is:
\begin{equation}
u(\mathbf{x}; \boldsymbol{\beta}) = \sum_{m=1}^{M} \beta_m \phi\left(\mathbf{w}_m^{\mathrm{T}} \mathbf{x} + b_m\right) = \boldsymbol{\phi}(\mathbf{x})^{\mathrm{T}} \boldsymbol{\beta},
\label{eq:elm}
\end{equation}
where $M$ denotes the number of hidden neurons and $\phi: \mathbb{R} \to \mathbb{R}$ is the activation function. The ELM formulation transforms the nonlinear training problem into a linear least-squares problem. For linear PDEs, the differential operator can be applied analytically to Eq.~\eqref{eq:elm}, and evaluating the PDE residual and boundary conditions at collocation points produces the linear system $\mathbf{H} \boldsymbol{\beta} = \mathbf{Y}$, where $\mathbf{H} \in \mathbb{R}^{(N_C + N_B) \times M}$ is the collocation matrix. The optimal output weights are obtained via the Moore-Penrose pseudoinverse:
\begin{equation}
\boldsymbol{\beta}^* = (\mathbf{H}^{\mathrm{T}}\mathbf{H})^{-1} \mathbf{H}^{\mathrm{T}} \mathbf{Y},
\label{eq:pseudoinverse}
\end{equation}
which can be computed efficiently using singular value decomposition (SVD). This closed-form solution reduces computational time by several orders of magnitude compared to iterative PINN training \cite{dwivedi2020physics}.

Several frequency-enhanced PIELM variants have been developed to address spectral bias within the ELM framework by introducing frequency-controlling scaling factors $\delta_m$. The general form of existing scaling-based methods is:
\begin{equation}
h_m(\mathbf{x}) = \sigma\left(\delta_m (\mathbf{w}_m^{\mathrm{T}} \mathbf{x} + b_m)\right), \quad \mathbf{w}_m \sim \mathcal{N}(\mathbf{0}, \mathbf{I}_d),
\label{eq:scaling_approach}
\end{equation}
where $\sigma$ denotes the activation function and $\delta_m$ varies linearly from $\delta_{\min}$ to $\delta_{\max}$ across neurons. Specific instantiations include Tanh-PIELM with hyperbolic tangent activation $h_m(\mathbf{x}) = \tanh(\delta_m (\mathbf{w}_m^{\mathrm{T}} \mathbf{x} + b_m))$, SIREN-PIELM with sinusoidal activation $h_m(\mathbf{x}) = \sin(\delta_m (\mathbf{w}_m^{\mathrm{T}} \mathbf{x} + b_m))$ inspired by implicit neural representations \cite{sitzmann2020implicit}, and GFF-PIELM with cosine activation $h_m(\mathbf{x}) = \cos(\delta_m (\mathbf{w}_m^{\mathrm{T}} \mathbf{x} + b_m))$ integrating Fourier feature mappings \cite{ren2025gffpielm}. All these methods control the effective frequency through the product $\delta_m \|\boldsymbol{\epsilon}_m\|$, where $\boldsymbol{\epsilon}_m$ denotes the randomly drawn base weight vector. While this scaling approach enables representation of higher frequencies, it couples frequency control with random weight magnitude, introducing a fundamental limitation: \emph{frequency variance grows quadratically with the scaling factor}, as we shall demonstrate rigorously in Section~\ref{sec:proposed}.

\section{Frequency Shift Physics-Informed ELM}
\label{sec:proposed}

This section develops the Frequency Shift PIELM (FS-PIELM) framework, proceeding from the core sampling mechanism through the network architecture to a theoretical comparison with multiplicative scaling.

\subsection{Core Principle: Frequency Shift via Mean Translation}

FS-PIELM controls the effective frequency of basis functions through \emph{additive mean shifting} rather than \emph{multiplicative scaling}. We propose the following \textbf{core formula} for generating hidden layer weights:
\begin{equation}
\boxed{\mathbf{w}_m = \boldsymbol{\mu}_m + \boldsymbol{\epsilon}_m, \quad \boldsymbol{\epsilon}_m \sim \mathcal{N}(\mathbf{0}, \mathbf{I}_d)}
\label{eq:fs_core}
\end{equation}
where $\boldsymbol{\mu}_m \in \mathbb{R}^d$ is the \emph{mean shift vector} that determines the target frequency, and $\boldsymbol{\epsilon}_m$ is a standard Gaussian random vector providing stochastic perturbation. The key insight is that \emph{only the mean is shifted while the variance remains constant at $\mathbf{I}_d$}.

This additive formulation stands in contrast to scaling-based methods. In the multiplicative scaling approach, $\mathbf{w}_m = \delta_m \cdot \boldsymbol{\epsilon}_m$ with $\boldsymbol{\epsilon}_m \sim \mathcal{N}(\mathbf{0}, \mathbf{I}_d)$, so that
\begin{equation}
\mathbf{w}_m \sim \mathcal{N}(\mathbf{0}, \delta_m^2 \mathbf{I}_d), \quad \text{variance scales as } \delta_m^2.
\label{eq:scaling_var}
\end{equation}
In the proposed frequency shift approach, $\mathbf{w}_m = \boldsymbol{\mu}_m + \boldsymbol{\epsilon}_m$ with $\boldsymbol{\epsilon}_m \sim \mathcal{N}(\mathbf{0}, \mathbf{I}_d)$, yielding
\begin{equation}
\mathbf{w}_m \sim \mathcal{N}(\boldsymbol{\mu}_m, \mathbf{I}_d), \quad \text{variance remains constant at } \mathbf{I}_d.
\label{eq:fs_var}
\end{equation}
This distinction is central to the advantages demonstrated in Section~\ref{sec:theory}.

\begin{figure}[!htb]
    \centering
    \includegraphics[width=0.8\textwidth]{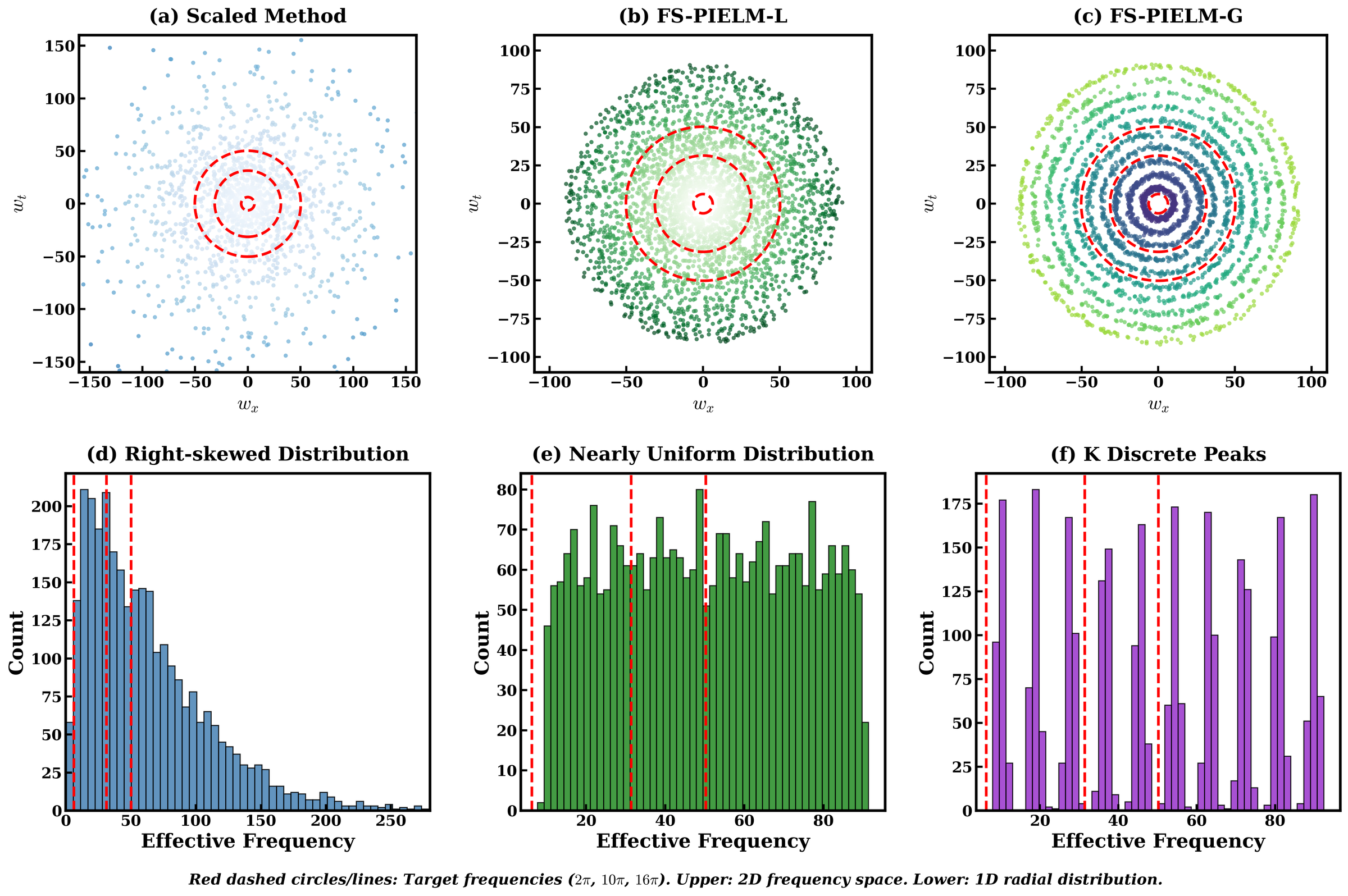}
    \caption{\textbf{Comparison of frequency control mechanisms:} (Left) Scaling-based approach where variance grows quadratically with the scaling factor. (Right) Frequency shift approach where variance remains constant regardless of target frequency. The additive mean-shifting preserves the Gaussian perturbation structure while enabling controlled frequency targeting.}
    \label{fig:mechanism}
\end{figure}

\begin{definition}[Frequency Shift Sampling]
\label{def:fs_sampling}
The frequency shift mechanism generates hidden layer weights through the following three-step procedure:
\begin{enumerate}[label=(\roman*)]
    \item \textbf{Random direction sampling}: Draw $\tilde{\mathbf{d}}_m \sim \mathcal{N}(\mathbf{0}, \mathbf{I}_d)$ and normalize to obtain a uniformly distributed direction on the unit sphere:
    \begin{equation}
    \mathbf{d}_m = \frac{\tilde{\mathbf{d}}_m}{\|\tilde{\mathbf{d}}_m\|} \in \mathbb{S}^{d-1}.
    \label{eq:direction}
    \end{equation}
    \item \textbf{Mean shift construction}: Construct the mean shift vector by scaling the random direction with the frequency magnitude parameter $\mu_m \geq 0$:
    \begin{equation}
    \boldsymbol{\mu}_m = \mu_m \mathbf{d}_m.
    \label{eq:mean_shift}
    \end{equation}
    \item \textbf{Weight generation}: Generate the final weight vector by adding the mean shift to a standard Gaussian sample:
    \begin{equation}
    \mathbf{w}_m = \boldsymbol{\mu}_m + \boldsymbol{\epsilon}_m = \mu_m \mathbf{d}_m + \boldsymbol{\epsilon}_m, \quad \boldsymbol{\epsilon}_m \sim \mathcal{N}(\mathbf{0}, \mathbf{I}_d).
    \label{eq:fs_sampling}
    \end{equation}
\end{enumerate}
The bias term is sampled uniformly as $b_m \sim \mathcal{U}(0, 2\pi)$, and the resulting basis function takes the form $h_m(\mathbf{x}) = \cos(\mathbf{w}_m^{\mathrm{T}} \mathbf{x} + b_m)$.
\end{definition}

We highlight three properties of this construction. First, the random direction $\mathbf{d}_m$ is uniformly distributed on $\mathbb{S}^{d-1}$, ensuring isotropy so that no spatial direction is preferentially favored. Second, the parameter $\mu_m$ directly determines the expected effective frequency $\mathbb{E}[\|\mathbf{w}_m\|] \approx \sqrt{\mu_m^2 + d}$, providing intuitive frequency targeting. Third, the variance of $\|\mathbf{w}_m\|$ stays bounded as $\mu_m$ grows (Theorem~\ref{thm:fs_variance}).

\subsection{Network Architecture}

Building upon the core frequency shift principle, we develop two architectural variants that provide different trade-offs between frequency resolution and robustness.

\begin{figure}[!htb]
    \centering
    \includegraphics[width=0.8\textwidth]{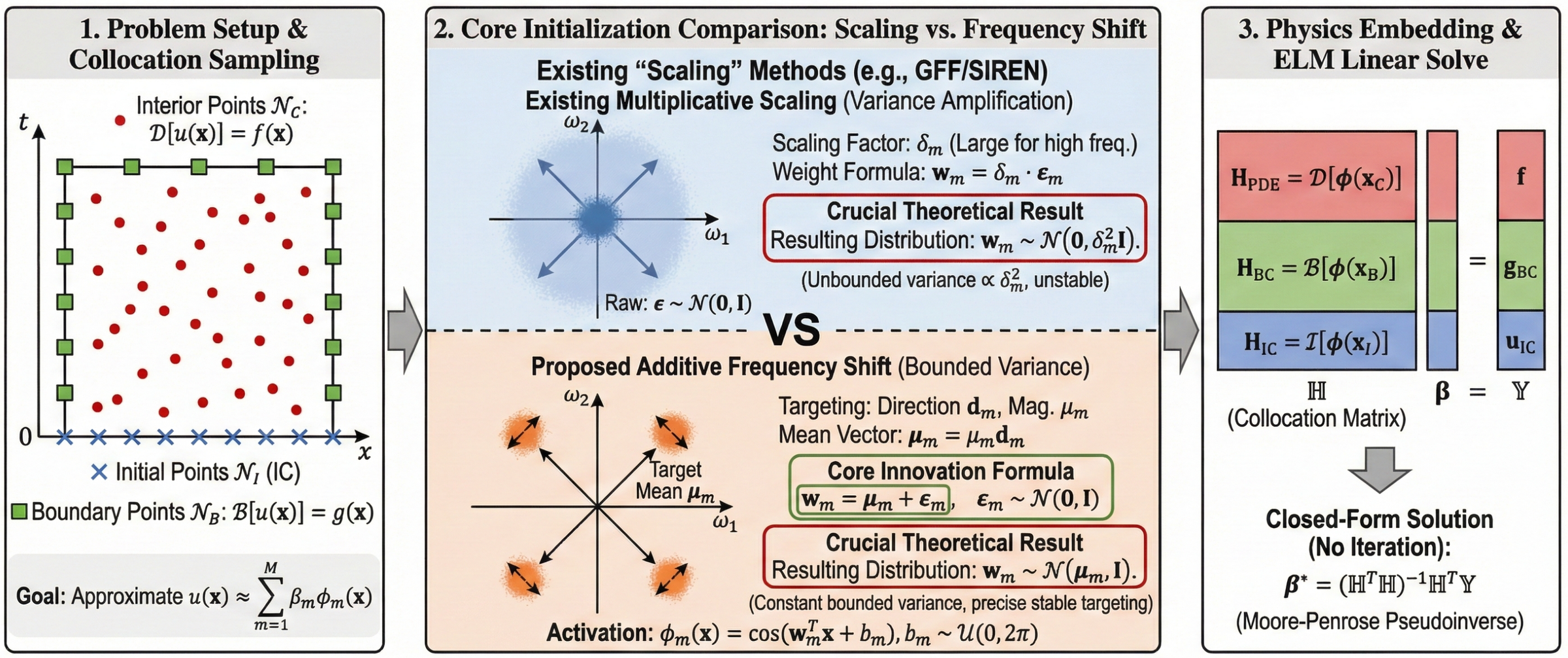}
    \caption{\textbf{FS-PIELM framework:} Schematic illustration of the Frequency Shift Physics-Informed Extreme Learning Machine architecture. The hidden layer weights are generated through the frequency shift mechanism with linearly distributed mean magnitudes, enabling adaptive spectral coverage for high-frequency PDE solutions.}
    \label{fig:framework}
\end{figure}

\subsubsection{Linear Variant (FS-PIELM-L)}

The linear variant assigns an independent frequency magnitude to each of the $M$ neurons, with magnitudes distributed linearly across the target frequency range:
\begin{equation}
\mu_m = \mu_{\min} + \frac{m-1}{M-1}(\mu_{\max} - \mu_{\min}), \quad m = 1, 2, \ldots, M.
\label{eq:fspielm_l}
\end{equation}
This provides maximum granularity in covering the frequency spectrum $[\mu_{\min}, \mu_{\max}]$, making it well-suited for problems requiring fine frequency resolution.

\subsubsection{Grouped Variant (FS-PIELM-G)}

The grouped variant partitions the $M$ neurons into $K$ groups, where neurons within each group share a common frequency magnitude:
\begin{equation}
\mu_k = \mu_{\min} + \frac{k-1}{K-1}(\mu_{\max} - \mu_{\min}), \quad k = 1, 2, \ldots, K.
\label{eq:fspielm_g}
\end{equation}
Each neuron in group $k$ uses $\mu_k$ but receives an independent random direction $\mathbf{d}_m$ and Gaussian perturbation $\boldsymbol{\epsilon}_m$. The redundancy within frequency bands improves robustness against unfavorable random realizations at the cost of coarser frequency resolution.

\subsubsection{Derivative Computation}

For PDE applications, efficient computation of basis function derivatives is essential. The frequency shift mechanism yields clean derivative expressions without multiplicative scaling factors. For the cosine basis $h_m(\mathbf{x}) = \cos(\mathbf{w}_m^{\mathrm{T}} \mathbf{x} + b_m)$:
\begin{align}
\frac{\partial h_m}{\partial x_i} &= -w_{m,i} \sin(\mathbf{w}_m^{\mathrm{T}} \mathbf{x} + b_m), \label{eq:first_deriv}\\
\frac{\partial^2 h_m}{\partial x_i^2} &= -w_{m,i}^2 \cos(\mathbf{w}_m^{\mathrm{T}} \mathbf{x} + b_m), \label{eq:second_deriv}\\
\Delta h_m &= -\|\mathbf{w}_m\|^2 \cos(\mathbf{w}_m^{\mathrm{T}} \mathbf{x} + b_m). \label{eq:laplacian}
\end{align}
Since $\mathbf{w}_m = \boldsymbol{\mu}_m + \boldsymbol{\epsilon}_m$ with unit-variance perturbation, the weight magnitudes remain well-controlled even for high target frequencies, avoiding the numerical amplification issues inherent in multiplicative scaling.

\subsection{Algorithm and Implementation}

The complete FS-PIELM training procedure is summarized in Algorithm~\ref{alg:fspielm}. The method requires only a single linear solve, maintaining the computational efficiency characteristic of the ELM framework.

\begin{algorithm}[H]
\caption{Frequency Shift Physics-Informed Extreme Learning Machine (FS-PIELM)}
\label{alg:fspielm}
\begin{algorithmic}[1]
\Require PDE operator $\mathcal{D}$; boundary operator $\mathcal{B}$; domain $\Omega$; source $f$; boundary data $g$
\Require Frequency bounds $\mu_{\min}$, $\mu_{\max}$; neurons $M$; collocation points $N_C$, $N_B$; variant (Linear or Grouped with $K$ groups)
\Ensure Trained output weights $\boldsymbol{\beta}^*$

\State \textbf{Step 1: Compute frequency magnitudes}
\If{Linear variant}
    \State $\mu_m \gets \mu_{\min} + \frac{m-1}{M-1}(\mu_{\max} - \mu_{\min})$ for $m = 1, \ldots, M$
\Else
    \State $\mu_k \gets \mu_{\min} + \frac{k-1}{K-1}(\mu_{\max} - \mu_{\min})$ for $k = 1, \ldots, K$; assign $\mu_m \gets \mu_{\lceil mK/M \rceil}$
\EndIf

\State \textbf{Step 2: Generate frequency-shifted weights via core formula~\eqref{eq:fs_core}}
\For{$m = 1$ to $M$}
    \State Sample direction: $\tilde{\mathbf{d}}_m \sim \mathcal{N}(\mathbf{0}, \mathbf{I}_d)$; $\mathbf{d}_m \gets \tilde{\mathbf{d}}_m / \|\tilde{\mathbf{d}}_m\|$
    \State Construct mean shift: $\boldsymbol{\mu}_m \gets \mu_m \mathbf{d}_m$
    \State Generate weight: $\mathbf{w}_m \gets \boldsymbol{\mu}_m + \boldsymbol{\epsilon}_m$ with $\boldsymbol{\epsilon}_m \sim \mathcal{N}(\mathbf{0}, \mathbf{I}_d)$; $b_m \sim \mathcal{U}(0, 2\pi)$
\EndFor

\State \textbf{Step 3: Generate collocation points}
\State Sample $\{\mathbf{x}_j^C\}_{j=1}^{N_C}$ uniformly in $\Omega$; sample $\{\mathbf{x}_k^B\}_{k=1}^{N_B}$ on $\partial\Omega$

\State \textbf{Step 4: Assemble collocation matrix}
\State $H_{\text{PDE}}[j,m] \gets \mathcal{D}[\cos(\mathbf{w}_m^{\mathrm{T}}\mathbf{x}_j^C + b_m)]$ for $j = 1, \ldots, N_C$, $m = 1, \ldots, M$
\State $H_{\text{BC}}[k,m] \gets \cos(\mathbf{w}_m^{\mathrm{T}}\mathbf{x}_k^B + b_m)$ for $k = 1, \ldots, N_B$, $m = 1, \ldots, M$
\State Form $\mathbf{H} \gets [\mathbf{H}_{\text{PDE}}; \mathbf{H}_{\text{BC}}]$, $\mathbf{Y} \gets [f(\mathbf{x}_j^C); g(\mathbf{x}_k^B)]^{\mathrm{T}}$

\State \textbf{Step 5: Solve via truncated SVD}
\State Compute SVD: $\mathbf{H} = \mathbf{U}\boldsymbol{\Sigma}\mathbf{V}^{\mathrm{T}}$; set $\tau \gets 10^{-12} \cdot \sigma_{\max}$
\State $\boldsymbol{\beta}^* \gets \mathbf{V}\boldsymbol{\Sigma}^+\mathbf{U}^{\mathrm{T}}\mathbf{Y}$ with regularized pseudoinverse
\State \Return $\boldsymbol{\beta}^*$
\end{algorithmic}
\end{algorithm}

\begin{proposition}[Computational Complexity]
\label{prop:complexity}
The computational complexity of FS-PIELM is: (i) matrix assembly: $\mathcal{O}((N_C + N_B) \cdot M \cdot C_{\mathcal{D}})$, where $C_{\mathcal{D}}$ is the cost of evaluating the PDE operator; (ii) SVD decomposition: $\mathcal{O}(\min(N, M)^2 \cdot \max(N, M))$ with $N = N_C + N_B$. The frequency shift mechanism introduces no additional asymptotic overhead compared to conventional scaling approaches.
\end{proposition}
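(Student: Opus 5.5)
We prove the three parts of Proposition~\ref{prop:complexity} by counting operations directly in the steps of Algorithm~\ref{alg:fspielm}. Two costs are treated as lower order. The first is weight generation (Steps~1--2). Each neuron needs one $d$-dimensional Gaussian draw for $\tilde{\mathbf{d}}_m$, one normalization, one scalar--vector product $\mu_m\mathbf{d}_m$, one further Gaussian draw $\boldsymbol{\epsilon}_m$ and one addition. That is $\mathcal{O}(d)$ per neuron and $\mathcal{O}(Md)$ in total. Computing the magnitudes $\mu_m$ from \eqref{eq:fspielm_l} or \eqref{eq:fspielm_g} adds $\mathcal{O}(M)$. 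The second is collocation sampling (Step~3), which costs $\mathcal{O}(Nd)$. Both are dominated by the assembly cost, because $C_{\mathcal{D}}\ge d$: evaluating even $\mathbf{w}_m^{\mathrm{T}}\mathbf{x}+b_m$ already needs $\mathcal{O}(d)$ work.

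\textbf{Part (i), matrix assembly.} $\mathbf{H}$ has $N=N_C+N_B$ rows and $M$ columns. By \eqref{eq:first_deriv}--\eqref{eq:laplacian}, every entry is a closed-form expression in $\mathbf{w}_m$, $b_m$ and $\mathbf{x}_j$. We define $C_{\mathcal{D}}$ as the cost of one such entry: one inner product $\mathbf{w}_m^{\mathrm{T}}\mathbf{x}_j$, one evaluation of $\sin$ and $\cos$, and a fixed polynomial in the components of $\mathbf{w}_m$. Two remarks make this a bounded constant. First, quantities such as $\|\mathbf{w}_m\|^2$ depend only on the neuron, not on the point. They can be precomputed once per neuron in $\mathcal{O}(Md)$ and reused across all rows. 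Second, the boundary entries have no more than the interior operator's cost. Summing over the $NM$ entries gives $\mathcal{O}(NMC_{\mathcal{D}})$.

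\textbf{Part (ii), SVD.} We invoke the standard bound: a thin SVD of an $N\times M$ matrix, for example Golub--Kahan bidiagonalization followed by QR iteration, costs $\mathcal{O}(\min(N,M)^2\max(N,M))$. The remaining operations are lower order. Thresholding singular values at $\tau$ costs $\mathcal{O}(\min(N,M))$. Forming $\mathbf{V}\boldsymbol{\Sigma}^+\mathbf{U}^{\mathrm{T}}\mathbf{Y}$ as successive matrix--vector products costs $\mathcal{O}(NM)$, which the SVD cost dominates.

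\textbf{No additional overhead.} We compare with a scaling method of the form \eqref{eq:scaling_approach}. Both methods sample $\mathcal{O}(Md)$ Gaussian entries and evaluate the same kind of trigonometric features. The scaling method carries the factor $\delta_m$ through its derivatives; FS-PIELM has none, by \eqref{eq:first_deriv}--\eqref{eq:laplacian}. So the per-entry constant $C_{\mathcal{D}}$ is the same up to $\mathcal{O}(1)$ in the two methods. The linear systems also have identical dimensions, so the SVD cost is identical. Hence the two methods have the same asymptotic complexity.

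The only step needing care is the claim that $C_{\mathcal{D}}$ is independent of $N$ and $M$. This holds only for linear operators whose action on a cosine feature can be written in closed form with neuron-level quantities precomputed, as in \eqref{eq:laplacian}. If instead derivatives were computed by automatic differentiation, $C_{\mathcal{D}}$ would have to absorb the overhead of that tool. The bound would still hold, but only with that enlarged constant, and we state this interpretation explicitly.
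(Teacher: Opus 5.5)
Your proposal is correct. The paper states this proposition without a written proof; its only justification is the single-linear-solve structure of Algorithm~\ref{alg:fspielm} and the remark that frequency-shift sampling adds just $\mathcal{O}(d)$ operations per neuron, which is exactly the step-by-step operation count you carry out, with extra care in absorbing the $\mathcal{O}(Md)$ sampling and $\mathcal{O}(Nd)$ collocation costs via $C_{\mathcal{D}} \ge d$ and in making explicit that $C_{\mathcal{D}}$ must not depend on $N$ or $M$.
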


\subsection{Theoretical Analysis}
\label{sec:theory}

We now establish rigorous theoretical foundations that demonstrate the advantages of the frequency shift mechanism over scaling-based approaches. The analysis proceeds in three parts: we first formalize the concept of effective frequency, then analyze the statistical properties of both approaches, and finally provide a quantitative comparison.

\subsubsection{Effective Frequency}

\begin{definition}[Effective Frequency]
\label{def:effective_freq}
For a periodic basis function $h(\mathbf{x}) = \cos(\mathbf{w}^{\mathrm{T}} \mathbf{x} + b)$ with weight vector $\mathbf{w} \in \mathbb{R}^d$, the \emph{effective frequency} is defined as:
\begin{equation}
\nu = \|\mathbf{w}\|_2.
\label{eq:eff_freq}
\end{equation}
This quantity determines the spatial oscillation rate of the basis function along the direction $\mathbf{w}/\|\mathbf{w}\|$.
\end{definition}

\subsubsection{Analysis of Scaling-Based Methods}

For scaling-based methods where $\mathbf{w}_m = \delta_m \boldsymbol{\epsilon}_m$ with $\boldsymbol{\epsilon}_m \sim \mathcal{N}(\mathbf{0}, \mathbf{I}_d)$, the effective frequency is $\nu_m = \delta_m \|\boldsymbol{\epsilon}_m\|$.

\begin{theorem}[Frequency Statistics for Scaling-Based Methods]
\label{thm:scaling_stats}
Let $\boldsymbol{\epsilon} \sim \mathcal{N}(\mathbf{0}, \mathbf{I}_d)$ and define $\nu = \delta \|\boldsymbol{\epsilon}\|$ for scaling factor $\delta > 0$. Then:
\begin{enumerate}[label=(\roman*)]
    \item $\mathbb{E}[\nu] = \delta \cdot c_d$, where $c_d = \sqrt{2}\,\Gamma\left(\frac{d+1}{2}\right)/\Gamma\left(\frac{d}{2}\right)$.
    \item $\mathrm{Var}(\nu) = \delta^2 \cdot (d - c_d^2)$.
    \item For $d = 2$: $c_2 = \sqrt{\pi/2} \approx 1.253$ and $\mathrm{Var}(\nu) \approx 0.429\delta^2$.
\end{enumerate}
\end{theorem}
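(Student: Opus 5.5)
The plan is to reduce everything to the chi distribution. Since $\boldsymbol{\epsilon} \sim \mathcal{N}(\mathbf{0}, \mathbf{I}_d)$, the quantity $R = \|\boldsymbol{\epsilon}\|$ follows a chi distribution with $d$ degrees of freedom, and $R^2 = \sum_{i=1}^d \epsilon_i^2 \sim \chi^2_d$. Because $\nu = \delta R$ with $\delta > 0$ deterministic, linearity of expectation gives $\mathbb{E}[\nu] = \delta\,\mathbb{E}[R]$ and $\mathrm{Var}(\nu) = \delta^2 \mathrm{Var}(R)$. Parts (i) and (ii) therefore follow once I establish $\mathbb{E}[R] = c_d$ and $\mathrm{Var}(R) = d - c_d^2$.

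For (i), I will compute $\mathbb{E}[R]$ from the density of $R^2 \sim \chi^2_d$,
\begin{equation*}
f(t) = \frac{t^{d/2-1} e^{-t/2}}{2^{d/2}\Gamma(d/2)}, \qquad t>0,
\end{equation*}
so that $\mathbb{E}[R] = \int_0^\infty t^{1/2} f(t)\,dt$. The substitution $s = t/2$ turns this into a Gamma integral:
\begin{equation*}
\mathbb{E}[R] = \frac{2^{(d+1)/2}\,\Gamma\left(\frac{d+1}{2}\right)}{2^{d/2}\,\Gamma\left(\frac{d}{2}\right)} = \sqrt{2}\,\frac{\Gamma\left(\frac{d+1}{2}\right)}{\Gamma\left(\frac{d}{2}\right)} = c_d.
\end{equation*}
An equivalent route is polar coordinates in $\mathbb{R}^d$, where the surface area of $\mathbb{S}^{d-1}$ cancels against the normalizing constant. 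This step is the only genuine computation, and its main risk is bookkeeping of the powers of $2$ and the Gamma arguments.

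For (ii), I use $\mathbb{E}[R^2] = \sum_i \mathbb{E}[\epsilon_i^2] = d$, which needs no integration, and hence $\mathrm{Var}(R) = \mathbb{E}[R^2] - (\mathbb{E}[R])^2 = d - c_d^2$. Multiplying by $\delta^2$ gives the claim.

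For (iii), I substitute $d=2$ and use $\Gamma(3/2) = \sqrt{\pi}/2$ and $\Gamma(1) = 1$. This gives $c_2 = \sqrt{2}\cdot\sqrt{\pi}/2 = \sqrt{\pi/2} \approx 1.2533$. Then $d - c_2^2 = 2 - \pi/2 \approx 0.4292$, so $\mathrm{Var}(\nu) \approx 0.429\,\delta^2$. I do not expect any real obstacle here.
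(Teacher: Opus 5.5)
Your proposal is correct and follows the same route as the paper: identify $\|\boldsymbol{\epsilon}\|$ as chi-distributed with $d$ degrees of freedom, then transfer its mean $c_d$ and variance $d-c_d^2$ to $\nu=\delta\|\boldsymbol{\epsilon}\|$ by scaling, which gives $\delta c_d$ and $\delta^2(d-c_d^2)$. The only difference is that you derive $\mathbb{E}\|\boldsymbol{\epsilon}\| = c_d$ from the $\chi^2_d$ density and $\mathbb{E}\|\boldsymbol{\epsilon}\|^2 = d$ explicitly (your Gamma-integral bookkeeping and the $d=2$ values check out), whereas the paper simply quotes these chi-distribution moments.
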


\begin{proof}
Since $\boldsymbol{\epsilon} \sim \mathcal{N}(\mathbf{0}, \mathbf{I}_d)$, we have $\|\boldsymbol{\epsilon}\|^2 \sim \chi^2(d)$. The chi distribution has $\mathbb{E}[\|\boldsymbol{\epsilon}\|] = c_d$ and $\mathrm{Var}(\|\boldsymbol{\epsilon}\|) = d - c_d^2$. By linearity, $\mathbb{E}[\nu] = \delta \cdot c_d$ and $\mathrm{Var}(\nu) = \delta^2(d - c_d^2)$.
\end{proof}

\begin{remark}[Variance Amplification Problem]
\label{rem:variance_amplification}
Theorem~\ref{thm:scaling_stats} reveals a critical limitation: \emph{frequency variance grows quadratically with $\delta^2$}. For target frequency $\nu^* = 80$ in $d=2$, achieving $\mathbb{E}[\nu] = \nu^*$ requires $\delta \approx 63.8$, yielding $\mathrm{Var}(\nu) \approx 1746$ with standard deviation $\approx 41.8$---over 50\% of the target. This large spread degrades approximation accuracy for high-frequency solutions.
\end{remark}

\subsubsection{Analysis of Frequency Shift Method}

For the frequency shift approach where $\mathbf{w}_m = \boldsymbol{\mu}_m + \boldsymbol{\epsilon}_m$ with $\|\boldsymbol{\mu}_m\| = \mu_m$, the statistical properties differ qualitatively.

\begin{theorem}[Frequency Control via Mean Shifting]
\label{thm:freq_control}
Let $\mathbf{w} = \boldsymbol{\mu} + \boldsymbol{\epsilon}$ with $\boldsymbol{\epsilon} \sim \mathcal{N}(\mathbf{0}, \mathbf{I}_d)$ and $\|\boldsymbol{\mu}\| = \mu$. Then:
\begin{enumerate}[label=(\roman*)]
    \item $\mathbb{E}[\|\mathbf{w}\|^2] = \mu^2 + d$.
    \item $\mathbb{E}[\|\mathbf{w}\|] \approx \sqrt{\mu^2 + d}$ for $\mu \gg 0$, becoming exact as $\mu \to \infty$.
    \item For target effective frequency $\nu^*$, the required parameter is $\mu = \sqrt{(\nu^*)^2 - d}$.
\end{enumerate}
\end{theorem}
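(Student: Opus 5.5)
Part (i) follows by expanding the square. Write $\|\mathbf{w}\|^2 = \|\boldsymbol{\mu}\|^2 + 2\boldsymbol{\mu}^{\mathrm{T}}\boldsymbol{\epsilon} + \|\boldsymbol{\epsilon}\|^2$. Since $\mathbb{E}[\boldsymbol{\epsilon}]=\mathbf{0}$, the cross term has zero mean. As in the proof of Theorem~\ref{thm:scaling_stats}, $\|\boldsymbol{\epsilon}\|^2\sim\chi^2(d)$, so $\mathbb{E}\|\boldsymbol{\epsilon}\|^2 = d$. Linearity of expectation then gives $\mathbb{E}\|\mathbf{w}\|^2=\mu^2+d$. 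More generally, $\|\mathbf{w}\|^2$ follows a noncentral $\chi^2(d,\mu^2)$ law. For use in (ii), I would also record its variance, $\mathrm{Var}(\|\mathbf{w}\|^2) = 2d+4\mu^2$, which follows from the same expansion using $\mathrm{Var}(\boldsymbol{\mu}^{\mathrm{T}}\boldsymbol{\epsilon})=\mu^2$ and $\mathrm{Cov}(\boldsymbol{\mu}^{\mathrm{T}}\boldsymbol{\epsilon},\|\boldsymbol{\epsilon}\|^2)=0$; the latter holds because third Gaussian moments vanish.

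For (ii), I would interpret ``$\approx$'' as $\mathbb{E}\|\mathbf{w}\| - \sqrt{\mu^2+d}\to 0$ as $\mu\to\infty$. The upper bound comes from Jensen's inequality: $\mathbb{E}\|\mathbf{w}\|\le\sqrt{\mathbb{E}\|\mathbf{w}\|^2}=\sqrt{\mu^2+d}$. For the lower bound, put $S=\|\mathbf{w}\|^2$ and $m=\mu^2+d$. Then
\[
\sqrt{m}-\sqrt{S} = \frac{m-S}{\sqrt{m}+\sqrt{S}} .
\]
Taking expectations and using $\mathbb{E}[m-S]=0$, I would rewrite this as
\[
\sqrt{m}-\mathbb{E}\sqrt{S}=\mathbb{E}\!\left[\frac{(m-S)^2}{\sqrt{m}\,(\sqrt{m}+\sqrt{S})^2}\right]\le \frac{\mathrm{Var}(S)}{m^{3/2}}=\frac{2d+4\mu^2}{(\mu^2+d)^{3/2}} .
\]
This uses the identity $\frac{1}{\sqrt m+\sqrt S}=\frac{1}{2\sqrt m}+\frac{\sqrt m-\sqrt S}{2\sqrt m(\sqrt m+\sqrt S)}$, or equivalently the algebraic identity $\sqrt m-\sqrt S=\frac{m-S}{2\sqrt m}+\frac{(\sqrt m-\sqrt S)^2}{2\sqrt m}$. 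With the latter, taking expectations gives $\sqrt m-\mathbb{E}\sqrt S=\mathbb{E}(\sqrt m-\sqrt S)^2/(2\sqrt m)\ge 0$. Combined with
\[
(\sqrt m-\sqrt S)^2=\frac{(m-S)^2}{(\sqrt m+\sqrt S)^2}\le\frac{(m-S)^2}{m},
\]
this yields the clean bound $0\le\sqrt{\mu^2+d}-\mathbb{E}\|\mathbf{w}\|\le \frac{\mathrm{Var}(S)}{2m^{3/2}} = \frac{d+2\mu^2}{(\mu^2+d)^{3/2}}=O(1/\mu)$. This error vanishes as $\mu\to\infty$, which proves (ii). The step most likely to need care is computing $\mathrm{Var}(S)$ correctly; the rest is algebra.

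Part (iii) is then immediate. Define the target through the second-moment relation of (i), namely $\nu^*=\sqrt{\mathbb{E}\|\mathbf{w}\|^2}$, which by (ii) is asymptotically the mean frequency. Solving $\mu^2+d=(\nu^*)^2$ gives $\mu=\sqrt{(\nu^*)^2-d}$, which is valid whenever $\nu^*\ge\sqrt d$. I would note that (iii) is exact in the root-mean-square sense and holds up to the $O(1/\nu^*)$ error of (ii) for the mean.
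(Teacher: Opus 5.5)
Your argument is correct. For (i) it coincides with the paper's proof: expand $\|\boldsymbol{\mu}+\boldsymbol{\epsilon}\|^2$ and take expectations. For (iii) the paper gives no argument at all. Your reading of $\nu^*$ as the root-mean-square frequency, with the caveat $\nu^*\ge\sqrt d$, is a legitimate way to make the inversion exact.

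The genuine difference is in (ii). The paper disposes of it with the single remark that ``concentration of measure implies'' $\mathbb{E}\|\mathbf{w}\|\to\sqrt{\mu^2+d}$. You instead prove the non-asymptotic sandwich
\[
0\le\sqrt{\mu^2+d}-\mathbb{E}\|\mathbf{w}\|\le\frac{2\mu^2+d}{(\mu^2+d)^{3/2}}.
\]
It rests on Jensen's inequality, the identity $\sqrt m-\sqrt S=\frac{m-S}{2\sqrt m}+\frac{(\sqrt m-\sqrt S)^2}{2\sqrt m}$, and $\mathrm{Var}(\|\mathbf{w}\|^2)=4\mu^2+2d$, which is the same variance the paper uses in the proof of Theorem~\ref{thm:fs_variance}. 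This buys an actual proof and an explicit $O(1/\mu)$ rate. It also shows that $\sqrt{\mu^2+d}$ always overestimates the mean frequency, using nothing beyond second moments.

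If one wanted instead to make the paper's concentration remark rigorous, the natural tool is the Gaussian Poincar\'e inequality. The map $\boldsymbol{\epsilon}\mapsto\|\boldsymbol{\mu}+\boldsymbol{\epsilon}\|$ is $1$-Lipschitz, so $\mathrm{Var}(\|\mathbf{w}\|)\le 1$ for every $\mu$. Then $\sqrt{\mu^2+d}-\mathbb{E}\|\mathbf{w}\|=\mathrm{Var}(\|\mathbf{w}\|)/(\sqrt{\mu^2+d}+\mathbb{E}\|\mathbf{w}\|)\le 1/\sqrt{\mu^2+d}$. This has a better constant than yours (the true gap is about $1/(2\mu)$ for large $\mu$) and gives a rigorous bounded-variance statement at the same time. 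However, it requires a functional inequality, whereas your route is purely elementary.

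One small slip: your first display should read $\mathbb{E}\bigl[(m-S)^2/\bigl(2\sqrt m\,(\sqrt m+\sqrt S)^2\bigr)\bigr]$, since you dropped a factor $\tfrac12$. The inequality you wrote after it remains true, just loose, and your subsequent ``clean bound'' has the correct constant.
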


\begin{proof}
Expanding $\|\mathbf{w}\|^2 = \|\boldsymbol{\mu} + \boldsymbol{\epsilon}\|^2 = \|\boldsymbol{\mu}\|^2 + 2\boldsymbol{\mu}^{\mathrm{T}}\boldsymbol{\epsilon} + \|\boldsymbol{\epsilon}\|^2$. Taking expectations: $\mathbb{E}[\|\mathbf{w}\|^2] = \mu^2 + 0 + d = \mu^2 + d$. For large $\mu$, concentration of measure implies $\mathbb{E}[\|\mathbf{w}\|] \to \sqrt{\mu^2 + d}$.
\end{proof}

\begin{theorem}[Bounded Frequency Variance for FS-PIELM]
\label{thm:fs_variance}
Let $\mathbf{w} = \boldsymbol{\mu} + \boldsymbol{\epsilon}$ with $\boldsymbol{\epsilon} \sim \mathcal{N}(\mathbf{0}, \mathbf{I}_d)$ and $\|\boldsymbol{\mu}\| = \mu > 0$. The variance of the effective frequency satisfies:
\begin{equation}
\mathrm{Var}(\|\mathbf{w}\|) \approx \frac{2\mu^2 + d}{2(\mu^2 + d)} \xrightarrow{\mu \to \infty} 1.
\label{eq:fs_var_bound}
\end{equation}
\end{theorem}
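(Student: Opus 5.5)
The plan is to reduce everything to the first two moments of $S := \|\mathbf{w}\|^2$ and then transfer them to $\|\mathbf{w}\| = \sqrt{S}$ by a second-order delta-method (Taylor) expansion of the square root about $m := \mathbb{E}[S]$. By Theorem~\ref{thm:freq_control}(i), $m = \mu^2 + d$. Since $\mathrm{Var}(\|\mathbf{w}\|) = \mathbb{E}[S] - (\mathbb{E}\sqrt{S})^2$, it suffices to approximate $\mathbb{E}\sqrt{S}$ to second order.

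\emph{Step 1: the variance of $S$.} By rotational invariance of $\boldsymbol{\epsilon}$, I may take $\boldsymbol{\mu} = \mu\,\mathbf{e}_1$. Then
\[
S = (\mu+\epsilon_1)^2 + \sum_{i\ge 2}\epsilon_i^2,
\]
which is a noncentral $\chi^2_d$ variable with noncentrality $\mu^2$. Its variance is therefore $\mathrm{Var}(S) = 2(d + 2\mu^2)$. This can also be checked directly: the expansion $S = \mu^2 + 2\mu\epsilon_1 + \|\boldsymbol{\epsilon}\|^2$ has uncorrelated fluctuating parts, since $\mathrm{Cov}(\epsilon_1, \|\boldsymbol{\epsilon}\|^2) = \mathbb{E}[\epsilon_1^3] = 0$. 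This gives $\mathrm{Var}(S) = 4\mu^2 + 2d$.

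\emph{Step 2: the delta method.} Expanding $\sqrt{S} = \sqrt{m} + \frac{S-m}{2\sqrt{m}} - \frac{(S-m)^2}{8m^{3/2}} + R$ and taking expectations gives
\[
\mathbb{E}\sqrt{S} \approx \sqrt{m} - \frac{\mathrm{Var}(S)}{8m^{3/2}}.
\]
Squaring, $(\mathbb{E}\sqrt{S})^2 \approx m - \frac{\mathrm{Var}(S)}{4m}$ up to higher-order terms. Hence
\[
\mathrm{Var}(\|\mathbf{w}\|) \approx \frac{\mathrm{Var}(S)}{4m} = \frac{2(d+2\mu^2)}{4(\mu^2+d)} = \frac{2\mu^2+d}{2(\mu^2+d)}.
\]
This is exactly \eqref{eq:fs_var_bound}, and the right-hand side tends to $1$ as $\mu\to\infty$.

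\emph{Step 3: the limit $1$ independently of the approximation.} I would confirm the limit directly from $\|\mathbf{w}\| = \sqrt{(\mu+\epsilon_1)^2 + \|\boldsymbol{\epsilon}_{\perp}\|^2}$, where $\boldsymbol{\epsilon}_\perp := (\epsilon_2,\dots,\epsilon_d)$. This gives $\|\mathbf{w}\| - \mu - \epsilon_1 = O_P(1/\mu)$. Moreover $\big|\,\|\mathbf{w}\| - |\mu+\epsilon_1|\,\big| \le \|\boldsymbol{\epsilon}_\perp\|^2/(2|\mu+\epsilon_1|)$, and $\|\mathbf{w}\|$ is $1$-Lipschitz in $\boldsymbol{\epsilon}$. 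By Gaussian concentration, $\mathrm{Var}(\|\mathbf{w}\|) \le 1$ for every $\mu$, and dominated convergence yields $\mathrm{Var}(\|\mathbf{w}\|) \to \mathrm{Var}(\epsilon_1) = 1$. This shows the variance is bounded uniformly in $\mu$, in contrast with Theorem~\ref{thm:scaling_stats}(ii).

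\emph{Main obstacle.} The delicate part is controlling the remainder $R$ in Step~2, which is what makes ``$\approx$'' meaningful with error $O(\mu^{-2})$. I would first try to bound $|R| \le C|S-m|^3/m^{5/2}$ on the event $|S-m| \le m/2$, using the moment bound $\mathbb{E}|S-m|^3 = O(\mu^3)$. On the complementary event, I would use the trivial bound together with the exponentially small probability from chi-square tail bounds. Together these give a total error of order $\mu^{3}/\mu^{5} = \mu^{-2}$ in $\mathbb{E}\sqrt{S}$, which is negligible relative to the leading terms.
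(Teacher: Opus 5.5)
Your Steps 1--2 are the paper's own proof. The paper also expands $\|\mathbf{w}\|^2=\mu^2+2\boldsymbol{\mu}^{\mathrm{T}}\boldsymbol{\epsilon}+\|\boldsymbol{\epsilon}\|^2$, gets $\mathrm{Var}(\|\mathbf{w}\|^2)=4\mu^2+2d$, and applies the first-order delta method $\mathrm{Var}(\sqrt{X})\approx \mathrm{Var}(X)/(4\,\mathbb{E}X)$. Your route through $\mathbb{E}S-(\mathbb{E}\sqrt{S})^2$ with a second-order expansion of $\mathbb{E}\sqrt{S}$ gives the same leading term.

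Step 3 is where you genuinely go beyond the paper, which gives no rigorous argument for either the limit or the boundedness. The Gaussian Poincar\'e inequality, applied to the $1$-Lipschitz map $\boldsymbol{\epsilon}\mapsto\|\boldsymbol{\mu}+\boldsymbol{\epsilon}\|$, gives $\mathrm{Var}(\|\mathbf{w}\|)\le 1$ for every $\mu$. Note that it is Poincar\'e, not sub-Gaussian concentration, that yields the constant $1$; concentration alone gives a larger absolute constant. Dominated convergence with the bound $\bigl|\|\mathbf{w}\|-\mu-\epsilon_1\bigr|\le 2\|\boldsymbol{\epsilon}\|$ then gives $\mathrm{Var}(\|\mathbf{w}\|)\to 1$. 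This is the mathematically meaningful content of the theorem, and you prove it cleanly.

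Your ``main obstacle'' paragraph, however, overstates what the remainder analysis buys. An error $O(\mu^{-2})$ in $\mathbb{E}\sqrt{S}$ becomes an error $2\sqrt{m}\cdot O(\mu^{-2})=O(\mu^{-1})$ in $\mathrm{Var}(\|\mathbf{w}\|)=m-(\mathbb{E}\sqrt{S})^2$. So your bound only shows the formula is within $O(1/\mu)$ of the truth.

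More importantly, no sharper remainder estimate can validate the formula beyond its leading constant. Expand $\sqrt{(\mu+\epsilon_1)^2+\|\boldsymbol{\epsilon}_\perp\|^2}$ directly, or keep the Taylor terms you discard, using $\mathbb{E}(S-m)^3=24\mu^2+8d$ and $\mathbb{E}(S-m)^4=48\mu^4+O(\mu^2)$. Either way you get $\mathrm{Var}(\|\mathbf{w}\|)=1-\frac{d-1}{2\mu^2}+O(\mu^{-4})$, whereas $\frac{2\mu^2+d}{2(\mu^2+d)}=1-\frac{d}{2\mu^2}+O(\mu^{-4})$.

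For $d=1$ this is transparent: $|\mu+\epsilon_1|$ has variance $1$ up to exponentially small terms, while the formula gives $1-\frac{1}{2(\mu^2+1)}$. Every term dropped in Step 2 contributes at order $\mu^{-2}$, the same order as the formula's own deviation from $1$. These are the square of the second-order correction and the third- and fourth-order Taylor terms.

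So the ``$\approx$'' holds only in the sense that both sides equal $1+O(\mu^{-2})$. Your Step 3 is the right way to state what is actually true.
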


\begin{proof}
Let $X = \|\mathbf{w}\|^2 = \mu^2 + 2\boldsymbol{\mu}^{\mathrm{T}}\boldsymbol{\epsilon} + \|\boldsymbol{\epsilon}\|^2$. Then $\mathrm{Var}(X) = 4\mu^2 + 2d$. Applying the delta method with $g(X) = \sqrt{X}$: $\mathrm{Var}(\sqrt{X}) \approx [g'(\mathbb{E}[X])]^2 \mathrm{Var}(X) = \frac{4\mu^2 + 2d}{4(\mu^2 + d)} = \frac{2\mu^2 + d}{2(\mu^2 + d)} \to 1$ as $\mu \to \infty$.
\end{proof}

\subsubsection{Comparative Analysis}

\begin{corollary}[Variance Ratio]
\label{cor:variance_comparison}
For target effective frequency $\nu^*$, the ratio of frequency variances is:
\begin{equation}
\frac{\mathrm{Var}(\nu^{\text{scale}})}{\mathrm{Var}(\nu^{\text{shift}})} \approx \frac{(\nu^*)^2(d - c_d^2)}{c_d^2}.
\label{eq:var_ratio}
\end{equation}
For $d = 2$ and $\nu^* = 80$: this ratio equals approximately $1746$, indicating that scaling-based methods exhibit nearly three orders of magnitude higher frequency variance than FS-PIELM.
\end{corollary}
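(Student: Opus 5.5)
The corollary follows by dividing the variance formula of Theorem~\ref{thm:scaling_stats} by the asymptotic variance from Theorem~\ref{thm:fs_variance}, once both methods are calibrated to the same mean effective frequency $\nu^*$.

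\textbf{Step 1: calibrate the scaling method.} By Theorem~\ref{thm:scaling_stats}(i), requiring $\mathbb{E}[\nu^{\text{scale}}]=\nu^*$ fixes $\delta=\nu^*/c_d$. Part (ii) of that theorem then gives the exact expression
\[
\mathrm{Var}(\nu^{\text{scale}})=\frac{(\nu^*)^2(d-c_d^2)}{c_d^2}.
\]

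\textbf{Step 2: calibrate the shift method.} By Theorem~\ref{thm:freq_control}(iii), the shift method needs $\mu=\sqrt{(\nu^*)^2-d}$. Substituting $\mu^2=(\nu^*)^2-d$ into Theorem~\ref{thm:fs_variance} gives
\[
\mathrm{Var}(\nu^{\text{shift}})\approx\frac{2(\nu^*)^2-d}{2(\nu^*)^2}=1-\frac{d}{2(\nu^*)^2}.
\]
This equals $1+O\bigl(d/(\nu^*)^2\bigr)$, so for $\nu^*\gg\sqrt d$ the denominator of the ratio may be replaced by $1$.

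\textbf{Step 3: form the ratio.} Dividing Step 1 by Step 2 yields Eq.~\eqref{eq:var_ratio}, with relative error $O\bigl(d/(\nu^*)^2\bigr)$. For $d=2$ and $\nu^*=80$ this error is about $1.6\times10^{-4}$. Using $c_2^2=\pi/2$ from Theorem~\ref{thm:scaling_stats}(iii), we have $(d-c_d^2)/c_d^2=(2-\pi/2)/(\pi/2)\approx0.2732$. Multiplying by $6400$ gives approximately $1748$, consistent with the value $1746$ quoted in Remark~\ref{rem:variance_amplification} (the gap is rounding in $\delta$).

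\textbf{Main obstacle: justifying ``$\approx$'' in Step 2.} Theorem~\ref{thm:fs_variance} rests on a first-order delta method, so its accuracy needs a check. I would make it rigorous by writing $\|\mathbf{w}\|=\sqrt{X}$ and expanding around $\mathbb{E}X=\mu^2+d$. The remainder is controlled using the Gaussian tails of $2\boldsymbol{\mu}^{\mathrm{T}}\boldsymbol{\epsilon}+\|\boldsymbol{\epsilon}\|^2-d$.

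A cleaner route is Gaussian concentration: $\|\cdot\|$ is $1$-Lipschitz, so the Gaussian Poincaré inequality gives $\mathrm{Var}(\|\mathbf{w}\|)\le1$ exactly, uniformly in $\mu$. Combined with the lower bound coming from the projection onto $\boldsymbol{\mu}/\mu$ (whose variance is $1$), this shows $\mathrm{Var}(\nu^{\text{shift}})\to1$. That suffices to establish the corollary as an asymptotic statement; in fact the Poincaré upper bound alone makes the ratio at least $(\nu^*)^2(d-c_d^2)/c_d^2$ rigorously.
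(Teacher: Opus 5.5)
Your proposal is correct and follows the same route the paper implicitly takes (the paper gives no separate proof): set $\delta=\nu^*/c_d$ in Theorem~\ref{thm:scaling_stats} so that $\mathrm{Var}(\nu^{\text{scale}})=(\nu^*)^2(d-c_d^2)/c_d^2$ exactly, then divide by the shift variance, which Theorem~\ref{thm:fs_variance} puts at approximately $1$. Your Gaussian Poincar\'e remark goes beyond the paper's delta-method heuristic: since $\boldsymbol{\epsilon}\mapsto\|\boldsymbol{\mu}+\boldsymbol{\epsilon}\|$ is $1$-Lipschitz, $\mathrm{Var}(\|\mathbf{w}\|)\le 1$ for every $\mu$, which makes the stated ratio a rigorous lower bound however $\mu$ is calibrated.
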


\subsubsection{Summary of Theoretical Advantages}

In summary, Theorems~\ref{thm:scaling_stats}--\ref{thm:fs_variance} establish that (i) the frequency variance under FS-PIELM approaches unity as $\mu \to \infty$, unlike the $\mathcal{O}(\delta^2)$ growth of scaling methods; (ii) the mapping $\mu \mapsto \sqrt{\mu^2 + d}$ provides an intuitive frequency parameterization; and (iii) random directions $\mathbf{d}_m$ guarantee isotropic spatial coverage. From a computational standpoint, the single-layer closed-form solution is fully preserved---the frequency shift sampling adds only $\mathcal{O}(d)$ operations per neuron.

\section{Numerical Experiments}
\label{sec:experiments}

This section reports numerical experiments that evaluate FS-PIELM against three baselines: Tanh-PIELM, SIREN-PIELM, and GFF-PIELM. Within each test case, all methods share identical network configurations to ensure fair comparison. The code is implemented in Python with PyTorch using double precision arithmetic (float64). All computations are carried out on a standard CPU, and every PIELM method produces a solution in seconds owing to the closed-form training procedure.

The quality of approximate solutions is evaluated using the relative $L_2$ error metric, defined as:
\begin{equation}
\epsilon_{L_2} = \frac{\|u_{\text{exact}} - u_{\text{pred}}\|_2}{\|u_{\text{exact}}\|_2} = \frac{\sqrt{\sum_{i=1}^{N_{\text{test}}} (u_{\text{exact}}(\mathbf{x}_i) - u_{\text{pred}}(\mathbf{x}_i))^2}}{\sqrt{\sum_{i=1}^{N_{\text{test}}} u_{\text{exact}}(\mathbf{x}_i)^2}},
\label{eq:l2_error}
\end{equation}
where the evaluation is performed on a uniform grid of $N_{\text{test}} = 100 \times 100 = 10000$ test points covering the computational domain. This metric provides a normalized measure of solution accuracy that facilitates comparison across different problem configurations. To assess robustness with respect to random initialization, each configuration is tested with 5 independent random seeds, and both the best achieved error and the average error across seeds are reported.

\subsection{Two-Dimensional Helmholtz Equation}
\label{sec:case1}

The 2D Helmholtz equation is a canonical test for high-frequency solvers: at large wavenumbers, inadequate spectral resolution produces phase errors analogous to the pollution effect in finite elements. We consider the unit square $\Omega = [0,1]^2$ with homogeneous Dirichlet conditions:
\begin{align}
-\Delta u - \kappa^2 u &= f(x,y), \quad (x,y) \in [0,1]^2, \label{eq:helmholtz2d}\\
u(x,y) &= 0, \quad (x,y) \in \partial[0,1]^2,
\end{align}
where $\Delta = \partial^2/\partial x^2 + \partial^2/\partial y^2$ denotes the Laplacian operator and the wavenumber is set to $\kappa = 24\pi \approx 75.4$, corresponding to a highly oscillatory solution with approximately 24 wavelengths along each spatial dimension.

The exact solution and corresponding source term are constructed as:
\begin{align}
u_{\text{exact}}(x,y) &= \sin(\kappa x) \sin(\kappa y), \\
f(x,y) &= \kappa^2 \sin(\kappa x) \sin(\kappa y).
\end{align}
The network configuration employs $M = 5000$ hidden neurons with $N_C = 8000$ interior collocation points sampled uniformly within the domain and $N_B = 400$ boundary points per edge (1600 total). The frequency range parameters are systematically varied as $\mu_{\min} \in \{5, 10, 15, 20, 25, 30\}$ and $\mu_{\max} \in \{80, 100, 120, 140, 160, 180\}$, yielding 36 configurations per method. For FS-PIELM-G, the number of groups is set to $K = 10$.

Table~\ref{tab:case1_results} reports the results. Tanh-PIELM produces errors of $\mathcal{O}(10^4)$, indicating a complete inability to represent the target wavenumber $\kappa=24\pi$; the hyperbolic tangent activation acts as a low-pass filter whose spectral energy decays exponentially beyond a cutoff \cite{rahaman2019spectral} that lies far below $\kappa$, so the basis functions cannot resolve the oscillatory solution regardless of how the scaling factor is chosen. SIREN-PIELM and GFF-PIELM, both employing periodic activations, reach $\mathcal{O}(10^{-5})$, while FS-PIELM-L attains $1.37\times10^{-8}$---a $6{,}366\times$ reduction relative to GFF-PIELM. FS-PIELM-L also shows the lowest seed-to-seed variability (standard deviation $1.01\times10^{-7}$), compared with $2.19\times10^{-4}$ for SIREN-PIELM and $7.07\times10^{-4}$ for GFF-PIELM, reflecting the tighter concentration of effective frequencies around $\kappa$ that the mean-shift mechanism provides. FS-PIELM-G achieves an intermediate best error of $1.07\times10^{-7}$, roughly one order of magnitude above FS-PIELM-L; the grouping reduces the number of distinct frequency magnitudes from $M=5000$ to $K=10$, sacrificing fine spectral resolution for additional within-group averaging. All methods require comparable wall-clock times (14--17\,s), confirming that the frequency shift sampling introduces no measurable computational overhead.

\begin{table}[!htb]
\centering
\caption{Performance comparison on the 2D Helmholtz equation ($\kappa = 24\pi$). Results show the optimal frequency parameters and corresponding relative $L_2$ error statistics over 5 random seeds.}
\label{tab:case1_results}
\begin{tabular}{lccccc}
\toprule
Method & $[\mu_{\min}, \mu_{\max}]$ & Best Rel. $L_2$ & Avg Rel. $L_2$ & Std Rel. $L_2$ & Time (s) \\
\midrule
Tanh-PIELM & [5, 80] & $1.81 \times 10^{4}$ & $9.77 \times 10^{4}$ & $6.55 \times 10^{4}$ & 16.62 \\
SIREN-PIELM & [30, 120] & $5.13 \times 10^{-5}$ & $4.07 \times 10^{-4}$ & $2.19 \times 10^{-4}$ & 14.48 \\
GFF-PIELM & [30, 120] & $8.72 \times 10^{-5}$ & $6.12 \times 10^{-4}$ & $7.07 \times 10^{-4}$ & 16.39 \\
FS-PIELM-L & [30, 140] & $\mathbf{1.37 \times 10^{-8}}$ & $1.05 \times 10^{-7}$ & $1.01 \times 10^{-7}$ & 16.45 \\
FS-PIELM-G & [5, 120] & $1.07 \times 10^{-7}$ & $4.14 \times 10^{-6}$ & $7.14 \times 10^{-6}$ & 14.05 \\
\bottomrule
\end{tabular}
\end{table}

\begin{figure}[!htb]
\centering
\begin{subfigure}[b]{0.8\textwidth}
    \centering
    \includegraphics[width=\textwidth]{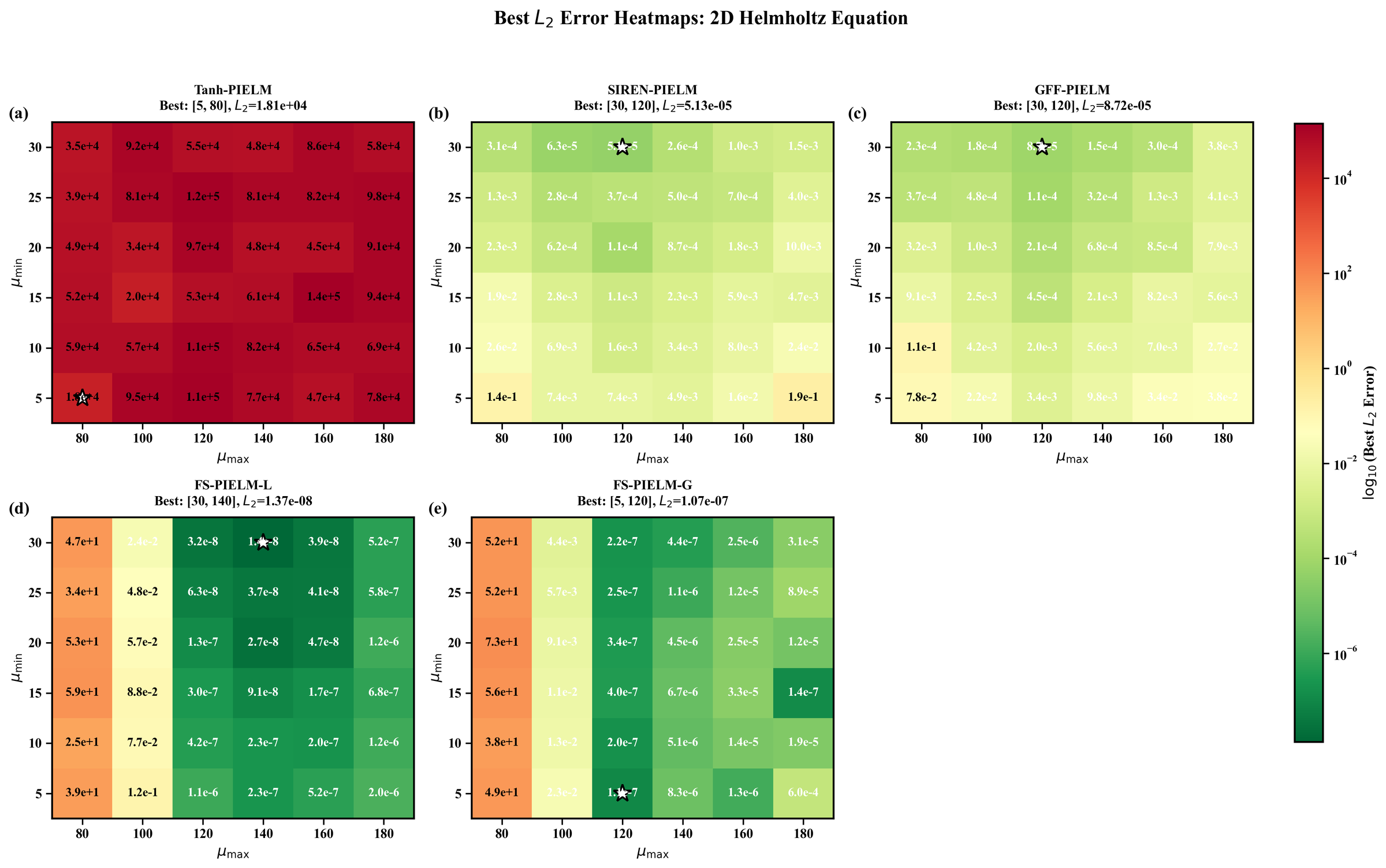}
    \caption{Best relative $L_2$ error (color-coded on logarithmic scale) as a function of frequency parameters $(\mu_{\min}, \mu_{\max})$ for five PIELM methods. Each cell represents the minimum error achieved over 5 independent random seeds. Green colors indicate lower errors and better accuracy, while red colors indicate higher errors.}
    \label{fig:case1_heatmap}
\end{subfigure}
\\[0.5em]
\begin{subfigure}[b]{0.8\textwidth}
    \centering
    \includegraphics[width=\textwidth]{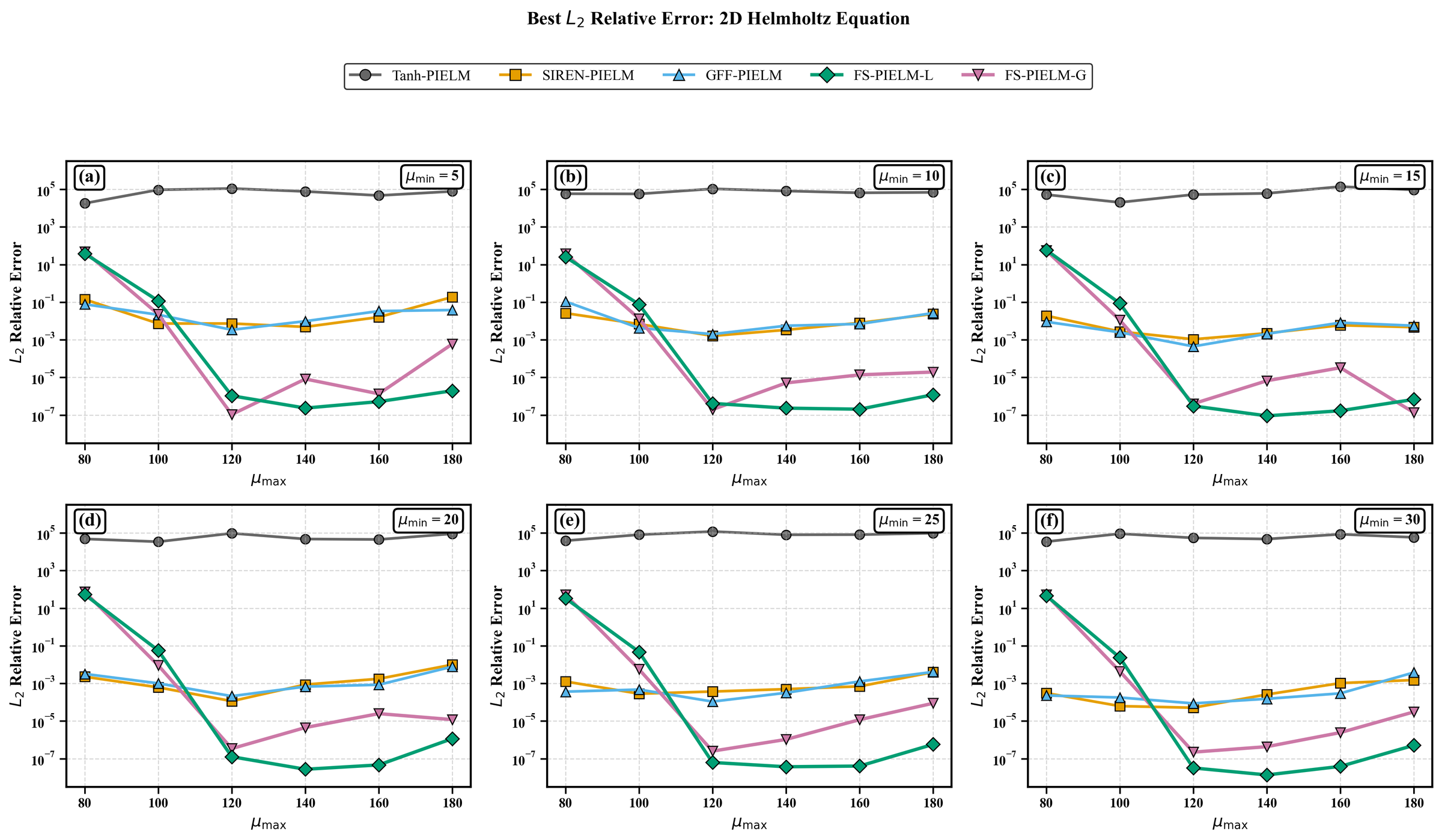}
    \caption{Best relative $L_2$ error versus $\mu_{\max}$ for each method at its respective optimal $\mu_{\min}$. Each point represents the minimum error achieved over 5 random seeds.}
    \label{fig:case1_lineplot}
\end{subfigure}
\caption{\textbf{2D Helmholtz equation:} Comparison of five PIELM methods (Tanh-PIELM, SIREN-PIELM, GFF-PIELM, FS-PIELM-L, and FS-PIELM-G) with wavenumber $\kappa = 24\pi$ on the unit square $\Omega = [0,1]^2$. Network configuration: $M = 5000$ neurons, $N_C = 8000$ collocation points, $N_B = 400$ boundary points per edge.}
\label{fig:case1_analysis}
\end{figure}

The heatmaps in Fig.~\ref{fig:case1_heatmap} show that FS-PIELM-L maintains low errors (green) over a broad region of $(\mu_{\min},\mu_{\max})$ values, whereas SIREN-PIELM and GFF-PIELM require narrow tuning near $[30,120]$. This parameter robustness is practically important: in real applications the dominant wavenumber may not be known precisely, and a method that tolerates a wider search range reduces the risk of suboptimal hyperparameter selection. The lineplot in Fig.~\ref{fig:case1_lineplot} shows that FS-PIELM-L improves monotonically with $\mu_{\max}$ and reaches $\mathcal{O}(10^{-8})$ for $\mu_{\max}\geq 140$, roughly $10^4$ times more accurate than both baselines. The mean-shifted weights concentrate near the target wavenumber $\kappa=24\pi$ (Theorem~\ref{thm:fs_variance}), while scaling-based weights spread over a much wider frequency band. Fig.~\ref{fig:case1_freq_analysis} visualizes this concentration effect, where the effective frequency histogram of FS-PIELM peaks sharply around $\kappa$ while the scaling-based histogram exhibits heavy tails that waste representational capacity on off-target frequencies.

\begin{figure}[!htb]
    \centering
    \includegraphics[width=0.8\textwidth]{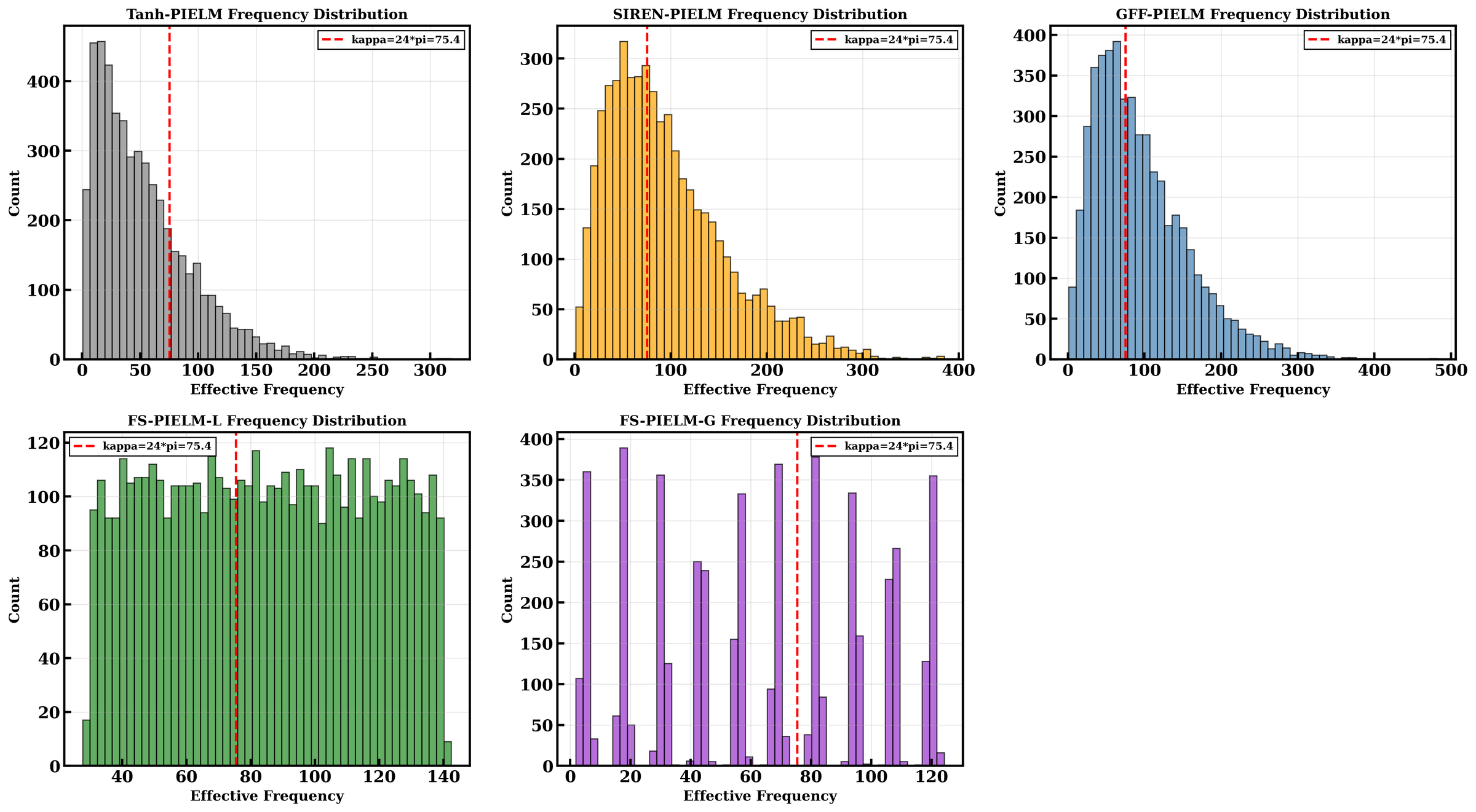}
    \caption{\textbf{2D Helmholtz equation - Frequency analysis:} Visualization of effective frequency distribution comparing scaling-based methods and the proposed frequency shift mechanism. The analysis demonstrates that FS-PIELM maintains bounded frequency variance while scaling methods exhibit quadratic variance growth with increasing target frequency.}
    \label{fig:case1_freq_analysis}
\end{figure}

\subsection{One-Dimensional Wave Equation with Time-Varying Frequency}
\label{sec:case2}

A more demanding scenario arises when the spectral content evolves in time. We consider a wave equation whose spatial frequency increases linearly from $2\pi$ to $16\pi$ over the unit time interval, requiring the hidden layer to cover roughly an order-of-magnitude frequency sweep simultaneously.
\begin{align}
\frac{\partial^2 u}{\partial t^2} - \frac{\partial^2 u}{\partial x^2} &= f(x,t), \quad (x,t) \in [0,1] \times [0,1], \label{eq:wave1d}\\
u(0,t) = u(1,t) &= g(t), \quad t \in [0,1], \\
u(x,0) &= u_0(x), \quad x \in [0,1], \\
\frac{\partial u}{\partial t}(x,0) &= v_0(x), \quad x \in [0,1].
\end{align}

The exact solution exhibits linearly increasing spatial frequency:
\begin{equation}
u_{\text{exact}}(x,t) = \sin\left[(2\pi + 14\pi t)x\right] \cos(10\pi t),
\end{equation}
where the time-dependent spatial frequency $\omega(t) = 2\pi + 14\pi t$ varies from $2\pi \approx 6.28$ at $t=0$ to $16\pi \approx 50.27$ at $t=1$. The temporal oscillation frequency is $10\pi \approx 31.42$. The initial conditions are:
\begin{align}
u_0(x) &= \sin(2\pi x), \\
v_0(x) &= 14\pi x \cos(2\pi x).
\end{align}

The source term $f(x,t)$ is derived analytically to ensure consistency with the exact solution. The network employs $M = 5000$ neurons with $N_C = 8000$ interior collocation points, $N_B = 400$ boundary points per temporal boundary, and $N_I = 400$ initial condition points. The frequency parameters are $\mu_{\min} \in \{1, 4, 7, 10, 13, 16\}$ and $\mu_{\max} \in \{60, 70, 80, 90, 100, 110\}$.

Table~\ref{tab:case2_results} summarizes the results. FS-PIELM-L reaches $2.00\times10^{-8}$, a $138\times$ reduction over GFF-PIELM ($2.76\times10^{-6}$) and $62\times$ over SIREN-PIELM ($1.24\times10^{-6}$). Because the spatial frequency sweeps from $2\pi$ to $16\pi$ over the time interval, the network must represent a range of frequencies simultaneously; the linearly spaced mean magnitudes of FS-PIELM-L distribute neurons across the full $[\mu_{\min},\mu_{\max}]$ band, each with unit-variance perturbation, providing dense and uniform spectral coverage. FS-PIELM-G achieves $6.74\times10^{-8}$, about $3\times$ above FS-PIELM-L; the grouped architecture again trades fine frequency resolution for within-group averaging, which remains effective here because the frequency sweep is continuous rather than concentrated at discrete harmonics. FS-PIELM-L also exhibits $40\times$ lower seed-to-seed standard deviation than GFF-PIELM, a direct consequence of the tighter frequency concentration afforded by the mean-shift mechanism. All methods require about 7\,s of computation.

\begin{table}[!htb]
\centering
\caption{Performance comparison on the 1D wave equation with time-varying frequency. Results show optimal parameters and relative $L_2$ error statistics over 5 random seeds.}
\label{tab:case2_results}
\begin{tabular}{lccccc}
\toprule
Method & $[\mu_{\min}, \mu_{\max}]$ & Best Rel. $L_2$ & Avg Rel. $L_2$ & Std Rel. $L_2$ & Time (s) \\
\midrule
Tanh-PIELM & [4, 60] & $5.99 \times 10^{4}$ & $2.43 \times 10^{5}$ & $1.49 \times 10^{5}$ & 7.40 \\
SIREN-PIELM & [13, 100] & $1.24 \times 10^{-6}$ & $7.60 \times 10^{-6}$ & $4.90 \times 10^{-6}$ & 6.89 \\
GFF-PIELM & [16, 100] & $2.76 \times 10^{-6}$ & $5.04 \times 10^{-6}$ & $2.18 \times 10^{-6}$ & 6.93 \\
FS-PIELM-L & [16, 110] & $\mathbf{2.00 \times 10^{-8}}$ & $9.36 \times 10^{-8}$ & $5.37 \times 10^{-8}$ & 6.90 \\
FS-PIELM-G & [16, 110] & $6.74 \times 10^{-8}$ & $2.59 \times 10^{-7}$ & $2.26 \times 10^{-7}$ & 6.83 \\
\bottomrule
\end{tabular}
\end{table}

\begin{figure}[!htb]
\centering
\begin{subfigure}[b]{0.8\textwidth}
    \centering
    \includegraphics[width=\textwidth]{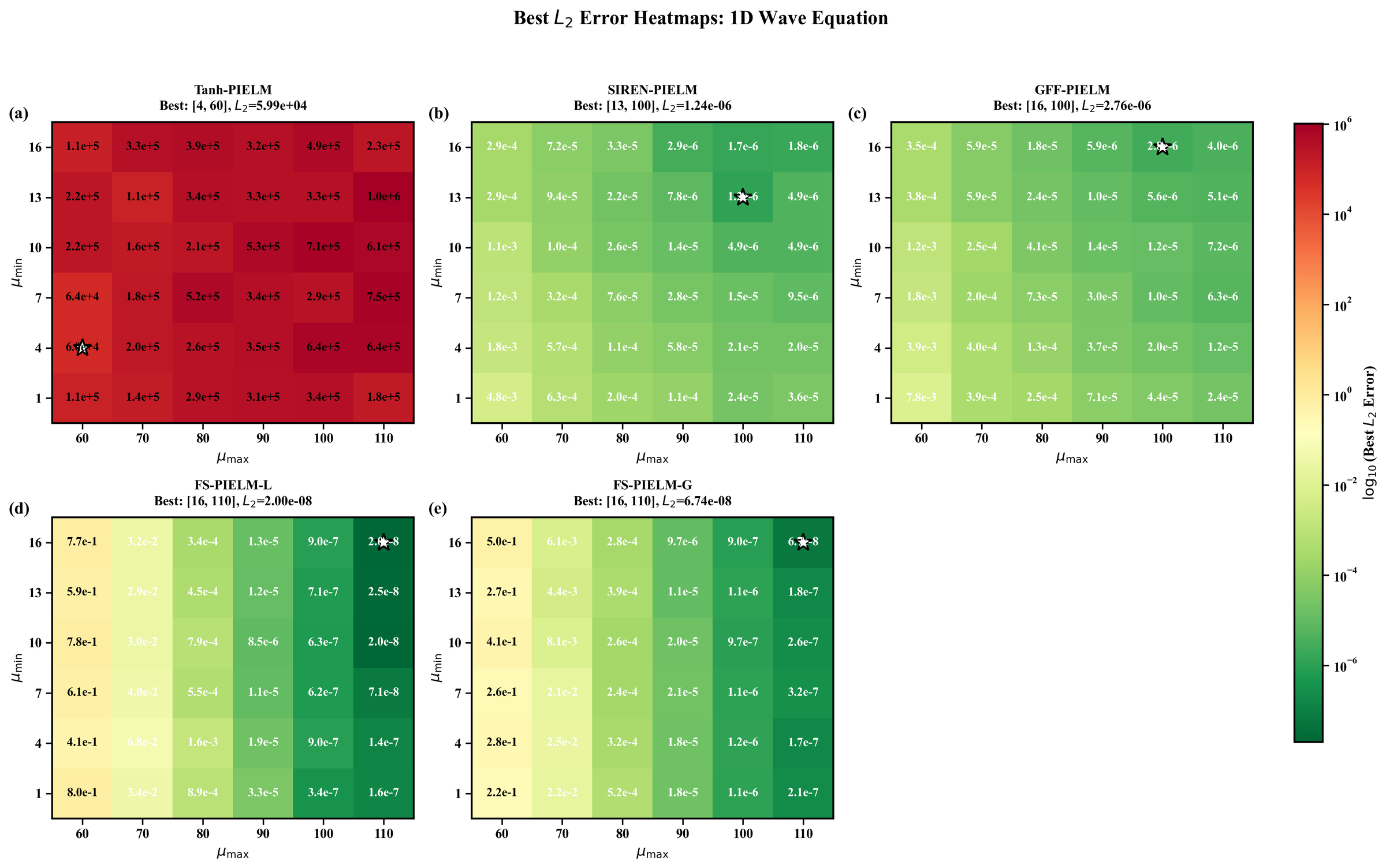}
    \caption{Best relative $L_2$ error as a function of $(\mu_{\min}, \mu_{\max})$; format as in Fig.~\ref{fig:case1_heatmap}.}
    \label{fig:case2_heatmap}
\end{subfigure}
\\[0.5em]
\begin{subfigure}[b]{0.8\textwidth}
    \centering
    \includegraphics[width=\textwidth]{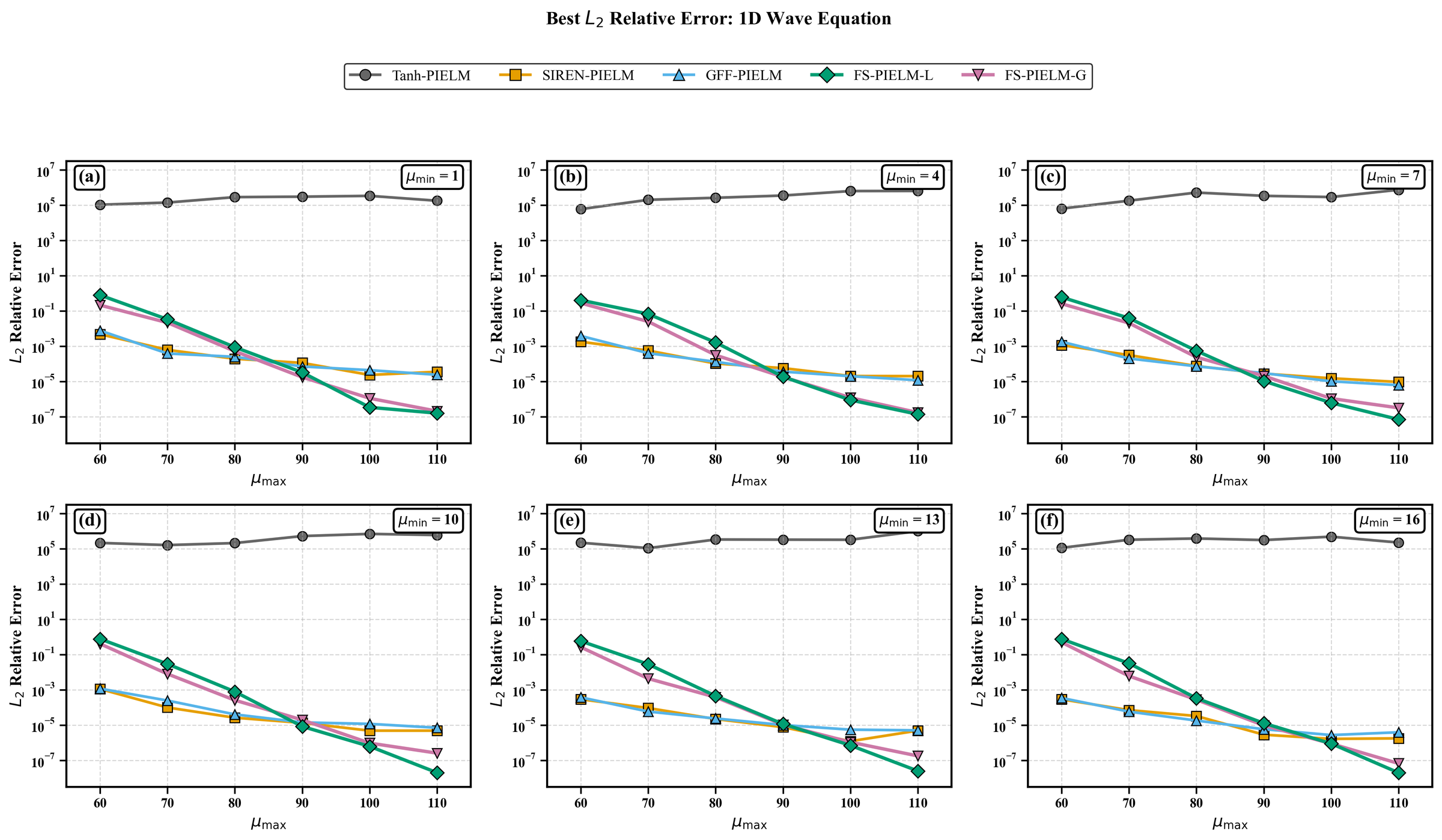}
    \caption{Best relative $L_2$ error versus $\mu_{\max}$ at each method's optimal $\mu_{\min}$; format as in Fig.~\ref{fig:case1_lineplot}.}
    \label{fig:case2_lineplot}
\end{subfigure}
\caption{\textbf{1D wave equation:} Comparison of five PIELM methods with time-varying spatial frequency $\omega(t) = 2\pi + 14\pi t$ on $[0,1] \times [0,1]$. Network configuration: $M = 5000$ neurons, $N_C = 8000$ collocation points, $N_B = 400$ boundary points, $N_I = 400$ initial condition points.}
\label{fig:case2_analysis}
\end{figure}

The heatmaps in Fig.~\ref{fig:case2_heatmap} show that SIREN-PIELM and GFF-PIELM require $\mu_{\min}\geq 13$ and $\mu_{\max}\geq 90$ to reach $\mathcal{O}(10^{-6})$, whereas FS-PIELM-L attains $\mathcal{O}(10^{-8})$ over a considerably wider parameter region. The time-varying frequency poses a particular challenge: fixed scaling factors must compromise between the low-frequency early-time regime ($\omega\approx6.3$) and the high-frequency late-time regime ($\omega\approx50.3$). In scaling-based methods, matching the late-time component inflates the weight variance for all neurons, degrading the accuracy of the early-time low-frequency component. Mean-shifted weights avoid this trade-off because different neurons are assigned to different frequency bands across the full range $[\mu_{\min},\mu_{\max}]$, each retaining unit variance, providing simultaneous coverage of the entire evolving spectrum without mutual interference (Fig.~\ref{fig:case2_lineplot}). Compared with the Helmholtz equation, the improvement factor over GFF-PIELM is smaller ($138\times$ versus $6{,}366\times$), which is expected because the maximum frequency here ($16\pi\approx50$) is lower than the Helmholtz wavenumber ($24\pi\approx75$), and the variance penalty of scaling grows quadratically with target frequency.

\subsection{One-Dimensional Poisson Equation with Multi-Scale Features}
\label{sec:case3}

A stringent test of multi-scale resolution is provided by a Poisson problem whose solution contains two widely separated frequency components (ratio 15:1). Resolving both scales simultaneously is notoriously difficult: basis functions tuned to one scale inevitably under-resolve or over-oscillate the other. We consider:
\begin{align}
\frac{d^2 u}{dx^2} &= f(x), \quad x \in [0,1], \label{eq:poisson1d}\\
u(0) = u(1) &= 0.
\end{align}

The exact solution contains two widely separated frequency scales:
\begin{equation}
u_{\text{exact}}(x) = \sin(5\pi x) + 0.2\sin(75\pi x),
\end{equation}
where the low-frequency component has wavenumber $5\pi \approx 15.71$ and the high-frequency component has wavenumber $75\pi \approx 235.62$, representing a frequency ratio of 15:1. The corresponding source term is:
\begin{equation}
f(x) = -(5\pi)^2 \sin(5\pi x) - 0.2(75\pi)^2 \sin(75\pi x).
\end{equation}

Due to the one-dimensional nature and relative simplicity of this problem, a smaller network with $M = 200$ neurons suffices. The training data consists of $N_C = 400$ interior collocation points and $N_B = 2$ boundary points. The frequency parameters span a wider range to accommodate the multi-scale solution: $\mu_{\min} \in \{1, 5, 10, 15, 20, 25\}$ and $\mu_{\max} \in \{200, 250, 300, 350, 400, 450\}$.

This case produces the widest performance gap among all benchmarks (Table~\ref{tab:case3_results}). SIREN-PIELM and GFF-PIELM achieve best-case errors near $10^{-8}$, but their average errors are $\mathcal{O}(10^{-2})$ with standard deviations exceeding $10^{-2}$, revealing severe instability across random seeds. The large discrepancy between best and average errors indicates a ``lottery'' effect: only particular random initializations happen to place enough weight vectors near both $5\pi$ and $75\pi$, while most draws leave one of the two frequency components poorly represented. FS-PIELM-L, by contrast, reaches $8.22\times10^{-12}$ with a standard deviation of only $1.13\times10^{-11}$---an $8{,}977\times$ improvement in best error and a $10^9$-fold improvement in reproducibility over GFF-PIELM. The 15:1 frequency ratio requires the network to resolve $5\pi$ and $75\pi$ components simultaneously; the additive mean shift distributes neurons across this range with linearly spaced mean magnitudes, guaranteeing that both scales are covered in every random draw. FS-PIELM-G also shows strong performance ($5.88\times10^{-10}$), though the coarser frequency discretization of $K=10$ groups limits its resolution compared with FS-PIELM-L.

\begin{table}[!htb]
\centering
\caption{Performance comparison on the 1D Poisson equation with multi-scale solution. Results show optimal parameters and relative $L_2$ error statistics over 5 random seeds.}
\label{tab:case3_results}
\begin{tabular}{lccccc}
\toprule
Method & $[\mu_{\min}, \mu_{\max}]$ & Best Rel. $L_2$ & Avg Rel. $L_2$ & Std Rel. $L_2$ & Time (s) \\
\midrule
Tanh-PIELM & [15, 200] & $8.36 \times 10^{4}$ & $3.30 \times 10^{7}$ & $4.02 \times 10^{7}$ & $<$0.01 \\
SIREN-PIELM & [25, 350] & $6.00 \times 10^{-8}$ & $1.00 \times 10^{-2}$ & $1.25 \times 10^{-2}$ & 0.01 \\
GFF-PIELM & [25, 350] & $7.38 \times 10^{-8}$ & $9.14 \times 10^{-3}$ & $1.25 \times 10^{-2}$ & 0.01 \\
FS-PIELM-L & [5, 250] & $\mathbf{8.22 \times 10^{-12}}$ & $2.25 \times 10^{-11}$ & $1.13 \times 10^{-11}$ & 0.01 \\
FS-PIELM-G & [15, 300] & $5.88 \times 10^{-10}$ & $1.50 \times 10^{-8}$ & $1.71 \times 10^{-8}$ & 0.01 \\
\bottomrule
\end{tabular}
\end{table}

\begin{figure}[!htb]
\centering
\begin{subfigure}[b]{0.8\textwidth}
    \centering
    \includegraphics[width=\textwidth]{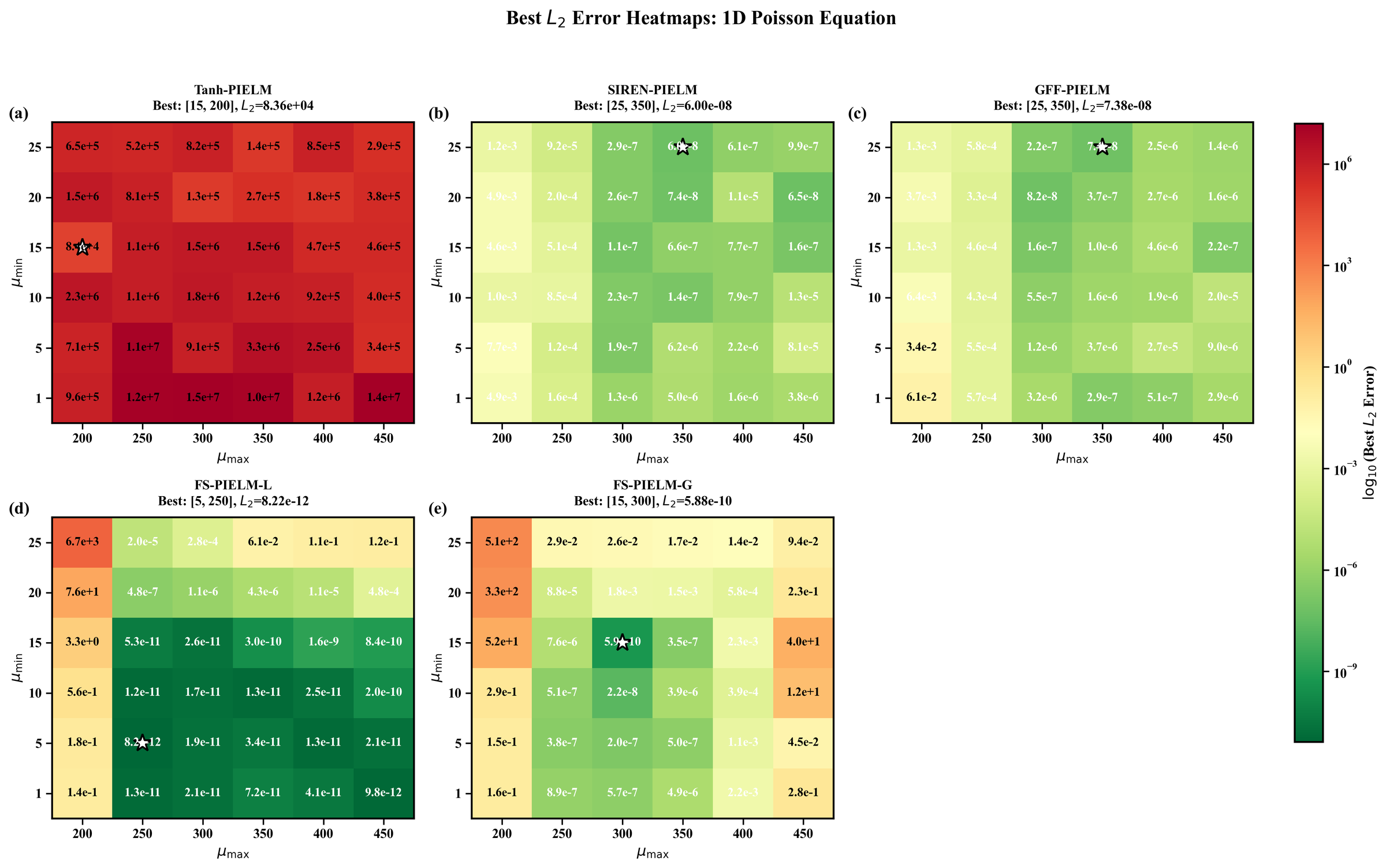}
    \caption{Best relative $L_2$ error as a function of $(\mu_{\min}, \mu_{\max})$; format as in Fig.~\ref{fig:case1_heatmap}.}
    \label{fig:case3_heatmap}
\end{subfigure}
\\[0.5em]
\begin{subfigure}[b]{0.8\textwidth}
    \centering
    \includegraphics[width=\textwidth]{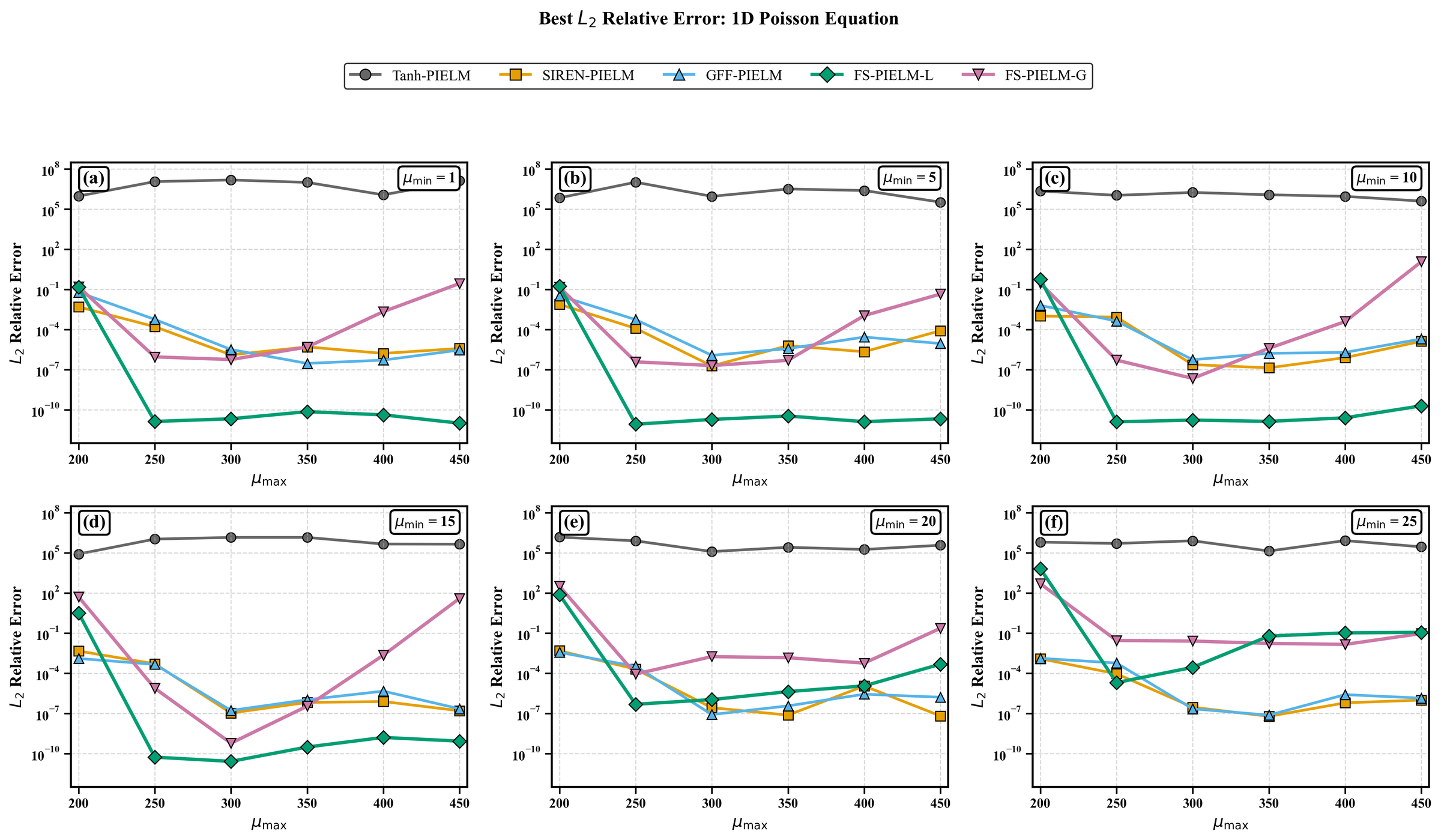}
    \caption{Best relative $L_2$ error versus $\mu_{\max}$ at each method's optimal $\mu_{\min}$; format as in Fig.~\ref{fig:case1_lineplot}.}
    \label{fig:case3_lineplot}
\end{subfigure}
\caption{\textbf{1D Poisson equation:} Comparison of five PIELM methods with multi-scale solution featuring frequency components at $5\pi$ and $75\pi$ (ratio 1:15) on $[0,1]$. Network configuration: $M = 200$ neurons, $N_C = 400$ collocation points, $N_B = 2$ boundary points.}
\label{fig:case3_analysis}
\end{figure}

The heatmaps in Fig.~\ref{fig:case3_heatmap} expose the instability of the multiplicative approaches on this problem: SIREN-PIELM and GFF-PIELM achieve low errors only in a narrow parameter band near $[25,350]$, degrading rapidly elsewhere. Even within that band, the cell-to-cell variation is large, matching the high standard deviations reported in Table~\ref{tab:case3_results}. FS-PIELM-L, by contrast, maintains errors below $10^{-10}$ over a broad region with $\mu_{\min}\leq 10$ and $\mu_{\max}\geq 200$ (Fig.~\ref{fig:case3_lineplot}). The near-machine-precision accuracy ($8.22\times10^{-12}$) achieved at $[\mu_{\min},\mu_{\max}]=[5,250]$ indicates that the frequency shift mechanism distributes basis functions effectively across the 15:1 spectral gap without requiring an excessively large $\mu_{\max}$. The fact that the optimal $\mu_{\max}=250$ is close to $75\pi\approx236$---the wavenumber of the high-frequency component---corroborating that $\mathbb{E}[\|\mathbf{w}_m\|]\approx\sqrt{\mu_m^2+d}$ provides a practical guide for parameter selection.

\subsection{Klein-Gordon Equation with Multiple Frequency Components}
\label{sec:case4}

The Klein-Gordon equation introduces a zeroth-order reaction term that couples different frequency modes, amplifying approximation errors when the basis spectrum does not match the solution. The exact solution here combines three qualitatively distinct components---a moderate-frequency oscillation, a high-frequency oscillation, and a linear baseline---testing the methods on mixed spectral content that includes a zero-frequency contribution. We consider:
\begin{align}
\frac{\partial^2 u}{\partial t^2} - \frac{\partial^2 u}{\partial x^2} + u &= f(x,t), \quad (x,t) \in [0,1]^2, \label{eq:kleingordon}\\
u(0,t) = u(1,t) &= g(t), \quad t \in [0,1], \\
u(x,0) &= u_0(x), \quad x \in [0,1], \\
\frac{\partial u}{\partial t}(x,0) &= v_0(x), \quad x \in [0,1].
\end{align}

The exact solution combines multiple frequency components spanning a broad spectrum:
\begin{equation}
u_{\text{exact}}(x,t) = x\sin(3\pi x)\cos(7\pi t) + t\sin(19\pi x)\cos(19\pi t) + xt,
\end{equation}
which contains three distinct frequency scales: a low-mid frequency component with spatial frequency $3\pi \approx 9.42$ and temporal frequency $7\pi \approx 21.99$, a high-frequency component with both spatial and temporal frequencies at $19\pi \approx 59.69$, and a linear term $xt$ representing the zero-frequency baseline.

The initial conditions are:
\begin{align}
u_0(x) &= x\sin(3\pi x), \\
v_0(x) &= \sin(19\pi x) + x.
\end{align}

The network configuration uses $M = 5000$ neurons with $N_C = 8000$ interior points, $N_B = 400$ boundary points, and $N_I = 400$ initial condition points. The frequency parameters are $\mu_{\min} \in \{1, 5, 10, 15, 20, 25\}$ and $\mu_{\max} \in \{50, 60, 70, 80, 90, 100\}$.

Table~\ref{tab:case4_results} lists the results. FS-PIELM-L achieves $4.95\times10^{-9}$, a $19\times$ improvement over GFF-PIELM and $36\times$ over SIREN-PIELM. The improvement is more modest than in the previous cases because the frequency ratio here is only about 6:1 ($19\pi$ versus $3\pi$), so the quadratic variance penalty of scaling remains moderate: to target the highest effective frequency $\nu^* = 19\pi\sqrt{2}\approx84$ (the 2D norm of the $(19\pi, 19\pi)$ wavenumber vector), the required scaling factor is $\delta = \nu^*/c_2\approx67$, yielding $\mathrm{Var}(\nu^{\text{scale}})\approx0.429\delta^2\approx1930$, which is large but not as extreme as in the Helmholtz ($\approx1746$ for $\kappa=24\pi$) or Poisson equations. The additional presence of the linear term $xt$ does not appear to cause difficulty for any periodic-activation method, which is expected because the ELM framework can approximate low-order polynomials through superposition of cosine basis functions even when individual neurons target higher frequencies. FS-PIELM-G achieves $1.01\times10^{-7}$, comparable to SIREN-PIELM and GFF-PIELM; with only $K=10$ groups spanning the moderate frequency range $[10,100]$, the grouped variant offers limited additional resolution over the baselines for this problem.

\begin{table}[!htb]
\centering
\caption{Performance comparison on the Klein-Gordon equation with multiple frequency components. Results show optimal parameters and relative $L_2$ error statistics over 5 random seeds.}
\label{tab:case4_results}
\begin{tabular}{lccccc}
\toprule
Method & $[\mu_{\min}, \mu_{\max}]$ & Best Rel. $L_2$ & Avg Rel. $L_2$ & Std Rel. $L_2$ & Time (s) \\
\midrule
Tanh-PIELM & [5, 50] & $4.80 \times 10^{3}$ & $9.37 \times 10^{3}$ & $4.17 \times 10^{3}$ & 8.07 \\
SIREN-PIELM & [20, 100] & $1.78 \times 10^{-7}$ & $4.13 \times 10^{-7}$ & $1.72 \times 10^{-7}$ & 7.34 \\
GFF-PIELM & [25, 100] & $9.53 \times 10^{-8}$ & $5.66 \times 10^{-7}$ & $4.38 \times 10^{-7}$ & 7.63 \\
FS-PIELM-L & [10, 100] & $\mathbf{4.95 \times 10^{-9}}$ & $2.92 \times 10^{-8}$ & $2.09 \times 10^{-8}$ & 7.19 \\
FS-PIELM-G & [10, 100] & $1.01 \times 10^{-7}$ & $2.14 \times 10^{-7}$ & $7.54 \times 10^{-8}$ & 7.23 \\
\bottomrule
\end{tabular}
\end{table}

\begin{figure}[!htb]
\centering
\begin{subfigure}[b]{0.8\textwidth}
    \centering
    \includegraphics[width=\textwidth]{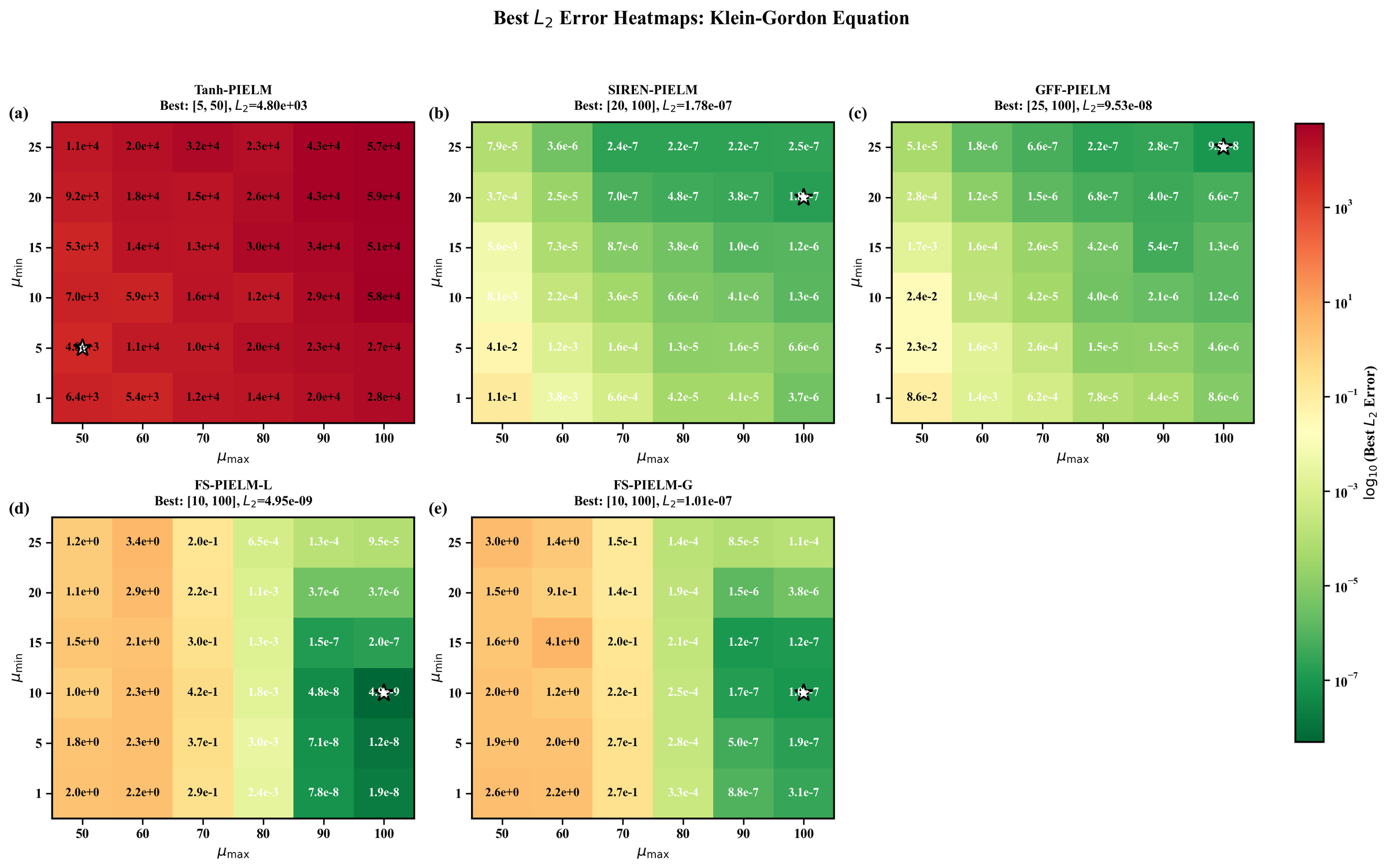}
    \caption{Best relative $L_2$ error as a function of $(\mu_{\min}, \mu_{\max})$; format as in Fig.~\ref{fig:case1_heatmap}.}
    \label{fig:case4_heatmap}
\end{subfigure}
\\[0.5em]
\begin{subfigure}[b]{0.8\textwidth}
    \centering
    \includegraphics[width=\textwidth]{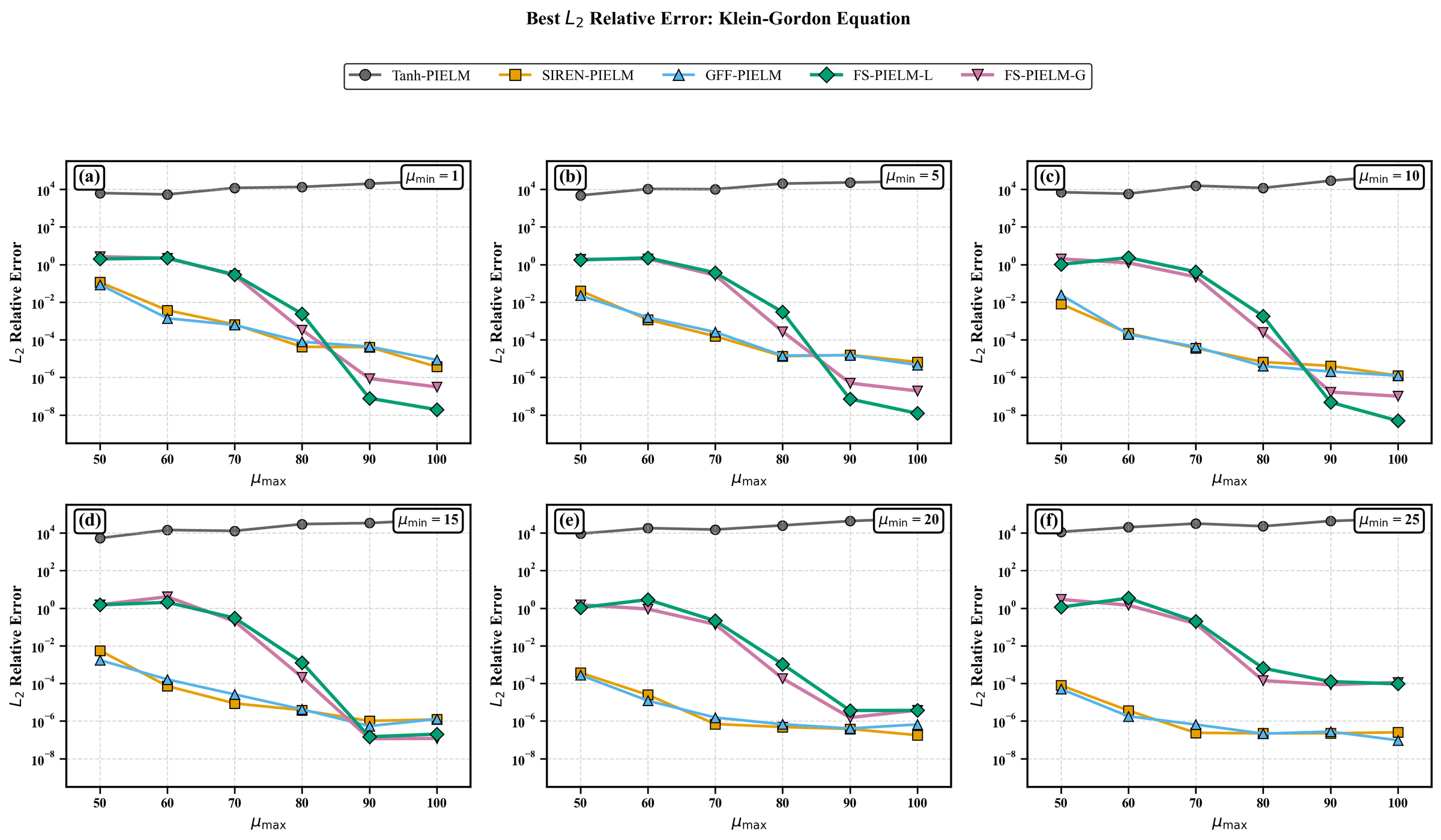}
    \caption{Best relative $L_2$ error versus $\mu_{\max}$ at each method's optimal $\mu_{\min}$; format as in Fig.~\ref{fig:case1_lineplot}.}
    \label{fig:case4_lineplot}
\end{subfigure}
\caption{\textbf{Klein-Gordon equation:} Comparison of five PIELM methods with multiple frequency components (spatial: $3\pi$, $19\pi$; temporal: $7\pi$, $19\pi$) on $[0,1]^2$. Network configuration: $M = 5000$ neurons, $N_C = 8000$ collocation points, $N_B = 400$ boundary points, $N_I = 400$ initial condition points.}
\label{fig:case4_analysis}
\end{figure}

The heatmaps (Fig.~\ref{fig:case4_heatmap}) show that SIREN-PIELM and GFF-PIELM perform comparably in overlapping optimal regions near $\mu_{\max}=100$, while FS-PIELM-L tolerates a broader range of $\mu_{\min}$ values (down to 10). Fig.~\ref{fig:case4_lineplot} shows a roughly one-order-of-magnitude advantage for FS-PIELM-L over both baselines across most $\mu_{\max}$ values, commensurate with the moderate frequency ratio of this problem. The observation that FS-PIELM-L achieves its optimum at $[\mu_{\min},\mu_{\max}]=[10,100]$ rather than [25, 100] suggests that including lower-frequency neurons ($\mu_m\approx10$) helps resolve the $3\pi$ component and the linear term, an option unavailable when the scaling factor simultaneously inflates variance for all neurons.

\subsection{Heat Equation with Multi-Frequency Initial Condition}
\label{sec:case5}

As a deliberate contrast to the preceding wave-dominated cases, we consider a heat equation whose diffusive physics smooths high-frequency content over time. Because Fourier mode decay rates scale as $k^2$, the effective spectral complexity of the solution decreases with time, testing whether the variance-reduction advantage of FS-PIELM persists when the physics itself suppresses high frequencies.
\begin{align}
\frac{\partial u}{\partial t} - \alpha \frac{\partial^2 u}{\partial x^2} &= 0, \quad (x,t) \in [-1,1] \times [0,1], \label{eq:heat}\\
u(-1,t) = u(1,t) &= 0, \quad t \in [0,1], \\
u(x,0) &= u_0(x), \quad x \in [-1,1],
\end{align}
where the thermal diffusivity is $\alpha = 1/(20\pi)^2$.

The exact solution demonstrates the frequency-dependent decay characteristic of parabolic equations:
\begin{equation}
u_{\text{exact}}(x,t) = e^{-\lambda_1 t}\sin(5\pi x) + 0.5e^{-\lambda_2 t}\sin(10\pi x) + 0.2e^{-\lambda_3 t}\sin(20\pi x),
\end{equation}
where the decay rates are determined by $\lambda_k = \alpha(k\pi)^2$:
\begin{align}
\lambda_1 &= \alpha(5\pi)^2 = 0.0625, \\
\lambda_2 &= \alpha(10\pi)^2 = 0.25, \\
\lambda_3 &= \alpha(20\pi)^2 = 1.0.
\end{align}
The key spatial frequencies are $5\pi \approx 15.71$, $10\pi \approx 31.42$, and $20\pi \approx 62.83$, with higher frequencies experiencing faster exponential decay.

The initial condition is:
\begin{equation}
u_0(x) = \sin(5\pi x) + 0.5\sin(10\pi x) + 0.2\sin(20\pi x).
\end{equation}

The network uses $M = 1200$ neurons with $N_C = 8000$ interior points, $N_B = 400$ boundary points per boundary, and $N_I = 1000$ initial condition points to adequately resolve the high-frequency initial data. The frequency parameters are $\mu_{\min} \in \{5, 10, 15, 20, 25, 30\}$ and $\mu_{\max} \in \{50, 60, 70, 80, 90, 100\}$.

Table~\ref{tab:case5_results} shows that all periodic-activation methods reach the $\mathcal{O}(10^{-9})$ range, with GFF-PIELM slightly leading at $1.41\times10^{-9}$ compared with $5.56\times10^{-9}$ for FS-PIELM-L. This is the only case in which FS-PIELM does not achieve the lowest error. The result is consistent with the physics of the problem: the $20\pi$ component decays by a factor of $e^{-1}\approx0.37$ over the time interval, and by $t=1$ the solution is dominated by the slowly decaying $5\pi$ mode ($e^{-0.0625}\approx0.94$). The effective frequency ratio at late times is therefore much smaller than the initial 4:1 ($20\pi$ to $5\pi$), placing all periodic-activation methods within a comparable accuracy range. The small $4\times$ gap between GFF-PIELM and FS-PIELM-L---both achieving $\mathcal{O}(10^{-9})$---reflects the reduced spectral demands of this diffusion-dominated problem rather than a deficiency in the frequency shift mechanism. When the physics itself suppresses high-frequency content, the variance reduction advantage becomes secondary, and all periodic-activation methods perform comparably regardless of their frequency control strategy. This observation delineates the regime where FS-PIELM provides its greatest benefit: problems in which high-frequency content persists throughout the domain or time interval, so that precise frequency targeting remains critical for solution accuracy.

\begin{table}[!htb]
\centering
\caption{Performance comparison on the heat equation with multi-frequency initial condition. Results show optimal parameters and relative $L_2$ error statistics over 5 random seeds.}
\label{tab:case5_results}
\begin{tabular}{lccccc}
\toprule
Method & $[\mu_{\min}, \mu_{\max}]$ & Best Rel. $L_2$ & Avg Rel. $L_2$ & Std Rel. $L_2$ & Time (s) \\
\midrule
Tanh-PIELM & [5, 50] & $8.29 \times 10^{-2}$ & $8.42 \times 10^{-2}$ & $8.82 \times 10^{-4}$ & 0.70 \\
SIREN-PIELM & [30, 70] & $1.84 \times 10^{-9}$ & $8.90 \times 10^{-9}$ & $8.06 \times 10^{-9}$ & 0.51 \\
GFF-PIELM & [30, 70] & $\mathbf{1.41 \times 10^{-9}}$ & $7.40 \times 10^{-9}$ & $4.85 \times 10^{-9}$ & 0.51 \\
FS-PIELM-L & [5, 80] & $5.56 \times 10^{-9}$ & $2.47 \times 10^{-8}$ & $1.64 \times 10^{-8}$ & 0.48 \\
FS-PIELM-G & [10, 90] & $6.58 \times 10^{-9}$ & $2.94 \times 10^{-8}$ & $2.08 \times 10^{-8}$ & 0.51 \\
\bottomrule
\end{tabular}
\end{table}

\begin{figure}[!htb]
\centering
\begin{subfigure}[b]{0.8\textwidth}
    \centering
    \includegraphics[width=\textwidth]{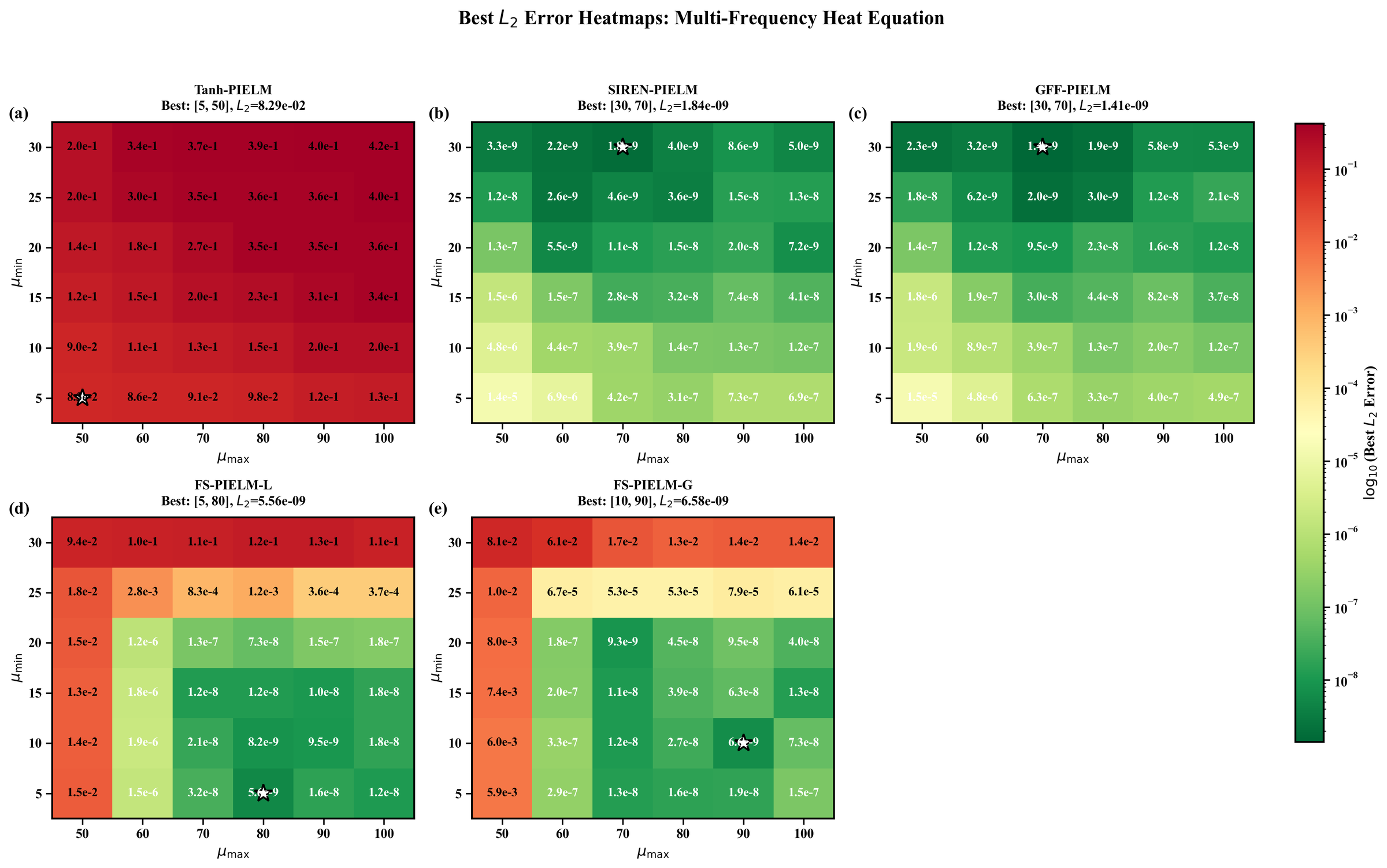}
    \caption{Best relative $L_2$ error as a function of $(\mu_{\min}, \mu_{\max})$; format as in Fig.~\ref{fig:case1_heatmap}.}
    \label{fig:case5_heatmap}
\end{subfigure}
\\[0.5em]
\begin{subfigure}[b]{0.8\textwidth}
    \centering
    \includegraphics[width=\textwidth]{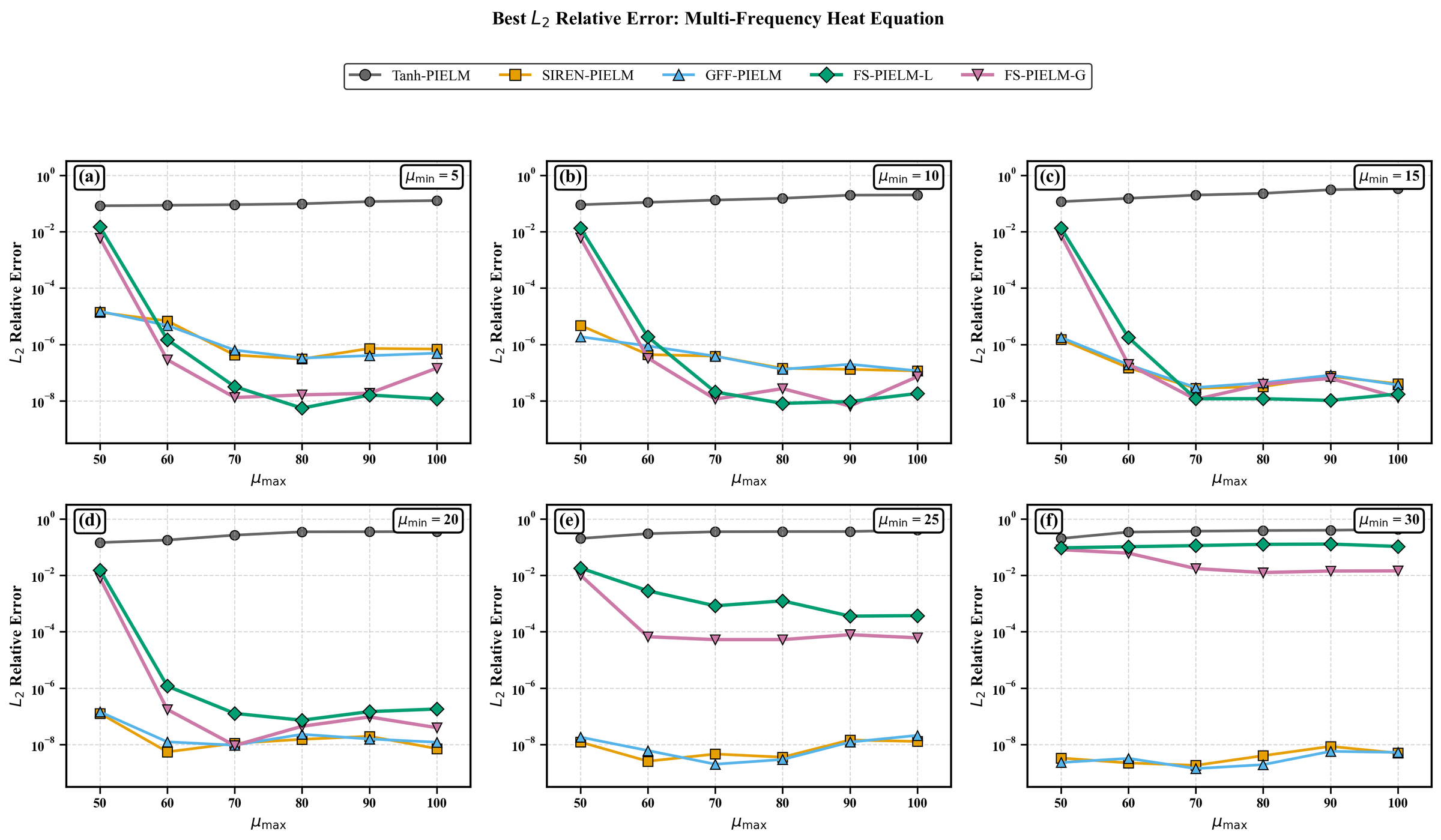}
    \caption{Best relative $L_2$ error versus $\mu_{\max}$ at each method's optimal $\mu_{\min}$; format as in Fig.~\ref{fig:case1_lineplot}.}
    \label{fig:case5_lineplot}
\end{subfigure}
\caption{\textbf{Heat equation:} Comparison of five PIELM methods with three-frequency initial condition (spatial frequencies: $5\pi$, $10\pi$, $20\pi$) and diffusivity $\alpha = 1/(20\pi)^2$ on $[-1,1] \times [0,1]$. Network configuration: $M = 1200$ neurons, $N_C = 8000$ collocation points, $N_B = 400$ boundary points, $N_I = 1000$ initial condition points.}
\label{fig:case5_analysis}
\end{figure}

Fig.~\ref{fig:case5_heatmap} shows that all periodic-activation methods yield similar error levels across the explored parameter space, with optimal regions concentrated near $\mu_{\min}\approx30$ and $\mu_{\max}\in[60,90]$. Tanh-PIELM is the only method that achieves a finite, though large, error ($8.29\times10^{-2}$), reflecting the diffusion-induced smoothing that brings the solution closer to the tanh spectral window. The lineplot (Fig.~\ref{fig:case5_lineplot}) shows that the roughly $4\times$ gap between GFF-PIELM and FS-PIELM-L is consistent across $\mu_{\max}$ values---a minor difference compared with the orders-of-magnitude gaps observed in the earlier, more spectrally demanding cases. Collectively, the heat equation results delineate the boundary of applicability for the frequency shift mechanism: when the physics itself suppresses high-frequency content, the variance reduction advantage becomes secondary, and all periodic-activation PIELM variants perform comparably.

\subsection{Two-Dimensional Advection-Diffusion on Pacman Domain}
\label{sec:case6}

All preceding cases use rectangular domains. To assess performance on irregular geometries---where rejection sampling replaces uniform grids and boundary representations are more involved---we solve an advection-diffusion equation on a Pacman-shaped domain. The re-entrant corners of the wedge opening stress global cosine basis functions, and the advection term breaks the spatial symmetry.
\begin{align}
\frac{\partial u}{\partial t} + 4\frac{\partial u}{\partial x} + 4\frac{\partial u}{\partial y} - \left(\frac{\partial^2 u}{\partial x^2} + \frac{\partial^2 u}{\partial y^2}\right) &= f(x,y,t), \quad (\mathbf{x},t) \in \Omega_P \times [0,1], \label{eq:advdiff}\\
u(\mathbf{x},t) &= g(\mathbf{x},t), \quad \mathbf{x} \in \partial\Omega_P, \\
u(\mathbf{x},0) &= u_0(\mathbf{x}), \quad \mathbf{x} \in \Omega_P,
\end{align}
where $\Omega_P$ is a Pacman domain defined as a circle centered at $(0.5, 0.5)$ with radius $0.4$, with a wedge of half-angle $\pi/4$ removed (the ``mouth'' opening to the right).

The exact solution features oscillations in both spatial directions with temporal decay:
\begin{equation}
u_{\text{exact}}(x,y,t) = e^{-0.4t}\sin(3\pi x)\sin(10\pi y),
\end{equation}
with spatial frequencies $3\pi \approx 9.42$ in the $x$-direction and $10\pi \approx 31.42$ in the $y$-direction. The source term is derived analytically:
\begin{equation}
f(x,y,t) = \left(-0.4 + 109\pi^2\right)u + 12\pi e^{-0.4t}\cos(3\pi x)\sin(10\pi y) + 40\pi e^{-0.4t}\sin(3\pi x)\cos(10\pi y).
\end{equation}

The network employs $M = 5000$ neurons with $N_C = 8000$ interior points sampled via rejection sampling within the Pacman domain, $N_B = 600$ boundary points distributed along the circular arc and straight edges, and $N_I = 800$ initial condition points. The frequency parameters are $\mu_{\min} \in \{1, 4, 7, 10, 13, 16\}$ and $\mu_{\max} \in \{35, 45, 55, 65, 75, 85\}$.

The irregular geometry raises the overall error floor for all methods (Table~\ref{tab:case6_results}). The best errors here ($10^{-4}$--$10^{-5}$) are considerably above those achieved on rectangular domains for comparable frequency content, due to the difficulty of enforcing boundary conditions on the curved arc and straight edges of the Pacman shape. FS-PIELM-L nonetheless reaches $3.87\times10^{-5}$, a $23\times$ reduction compared with GFF-PIELM ($8.96\times10^{-4}$). The standard deviation of FS-PIELM-L ($5.99\times10^{-5}$) is $55\times$ smaller than that of GFF-PIELM ($3.27\times10^{-3}$), indicating that mean-shifted weights are less sensitive to the particular random realization---a useful property when the collocation point distribution is itself irregular. FS-PIELM-G tracks FS-PIELM-L closely ($5.42\times10^{-5}$ versus $3.87\times10^{-5}$), suggesting that the moderate frequency content of this problem does not require per-neuron frequency resolution. Computation times remain comparable across methods (11--13\,s).

\begin{table}[!htb]
\centering
\caption{Performance comparison on the 2D advection-diffusion equation on Pacman domain. Results show optimal parameters and relative $L_2$ error statistics over 5 random seeds.}
\label{tab:case6_results}
\begin{tabular}{lccccc}
\toprule
Method & $[\mu_{\min}, \mu_{\max}]$ & Best Rel. $L_2$ & Avg Rel. $L_2$ & Std Rel. $L_2$ & Time (s) \\
\midrule
Tanh-PIELM & [1, 35] & $7.79 \times 10^{0}$ & $1.56 \times 10^{1}$ & $5.19 \times 10^{0}$ & 13.47 \\
SIREN-PIELM & [16, 35] & $6.21 \times 10^{-4}$ & $3.36 \times 10^{-3}$ & $3.37 \times 10^{-3}$ & 12.24 \\
GFF-PIELM & [16, 35] & $8.96 \times 10^{-4}$ & $2.95 \times 10^{-3}$ & $3.27 \times 10^{-3}$ & 12.64 \\
FS-PIELM-L & [16, 45] & $\mathbf{3.87 \times 10^{-5}}$ & $1.21 \times 10^{-4}$ & $5.99 \times 10^{-5}$ & 11.77 \\
FS-PIELM-G & [16, 45] & $5.42 \times 10^{-5}$ & $2.47 \times 10^{-4}$ & $2.41 \times 10^{-4}$ & 11.80 \\
\bottomrule
\end{tabular}
\end{table}

\begin{figure}[!htb]
\centering
\begin{subfigure}[b]{0.8\textwidth}
    \centering
    \includegraphics[width=\textwidth]{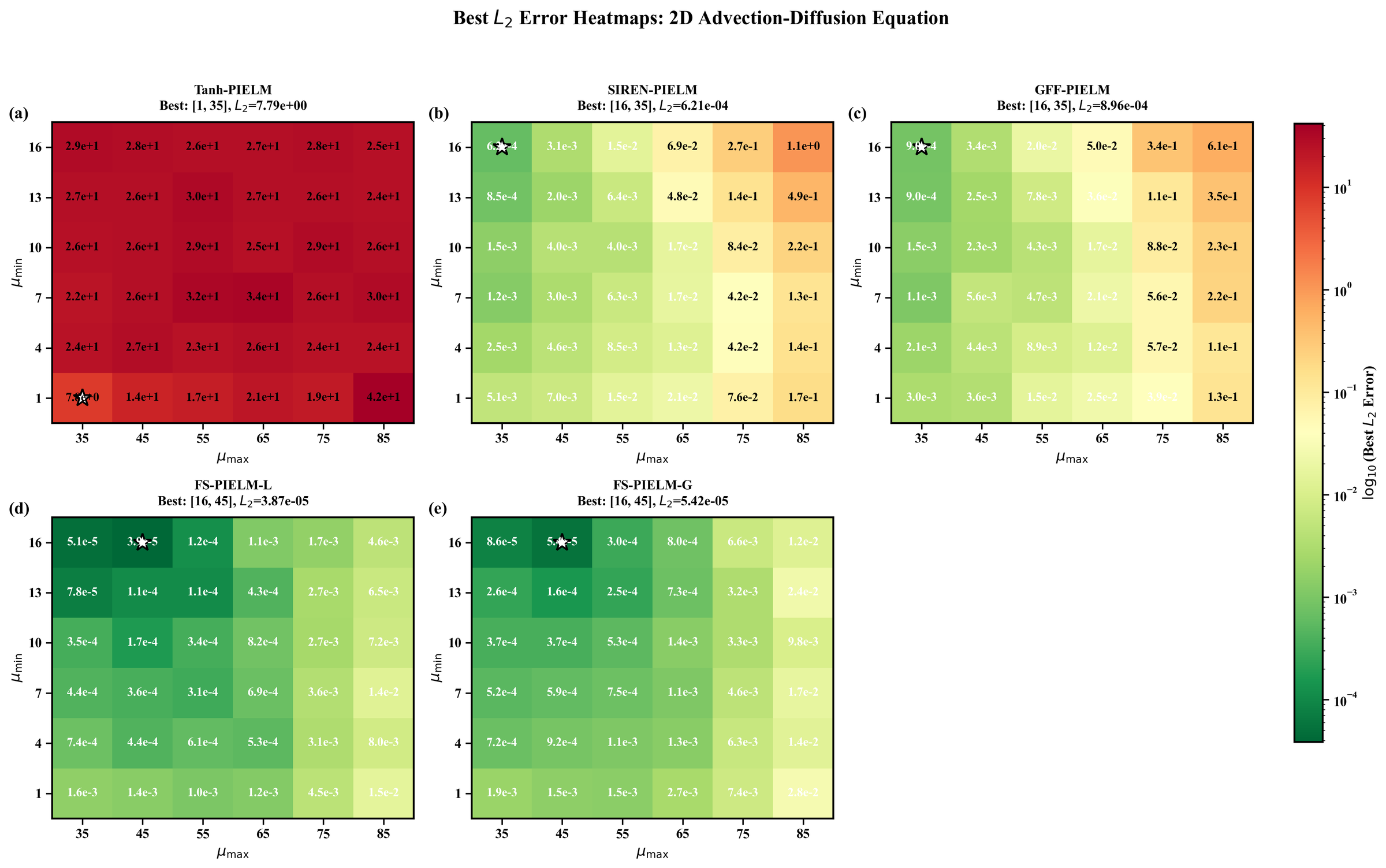}
    \caption{Best relative $L_2$ error as a function of $(\mu_{\min}, \mu_{\max})$; format as in Fig.~\ref{fig:case1_heatmap}.}
    \label{fig:case6_heatmap}
\end{subfigure}
\\[0.5em]
\begin{subfigure}[b]{0.8\textwidth}
    \centering
    \includegraphics[width=\textwidth]{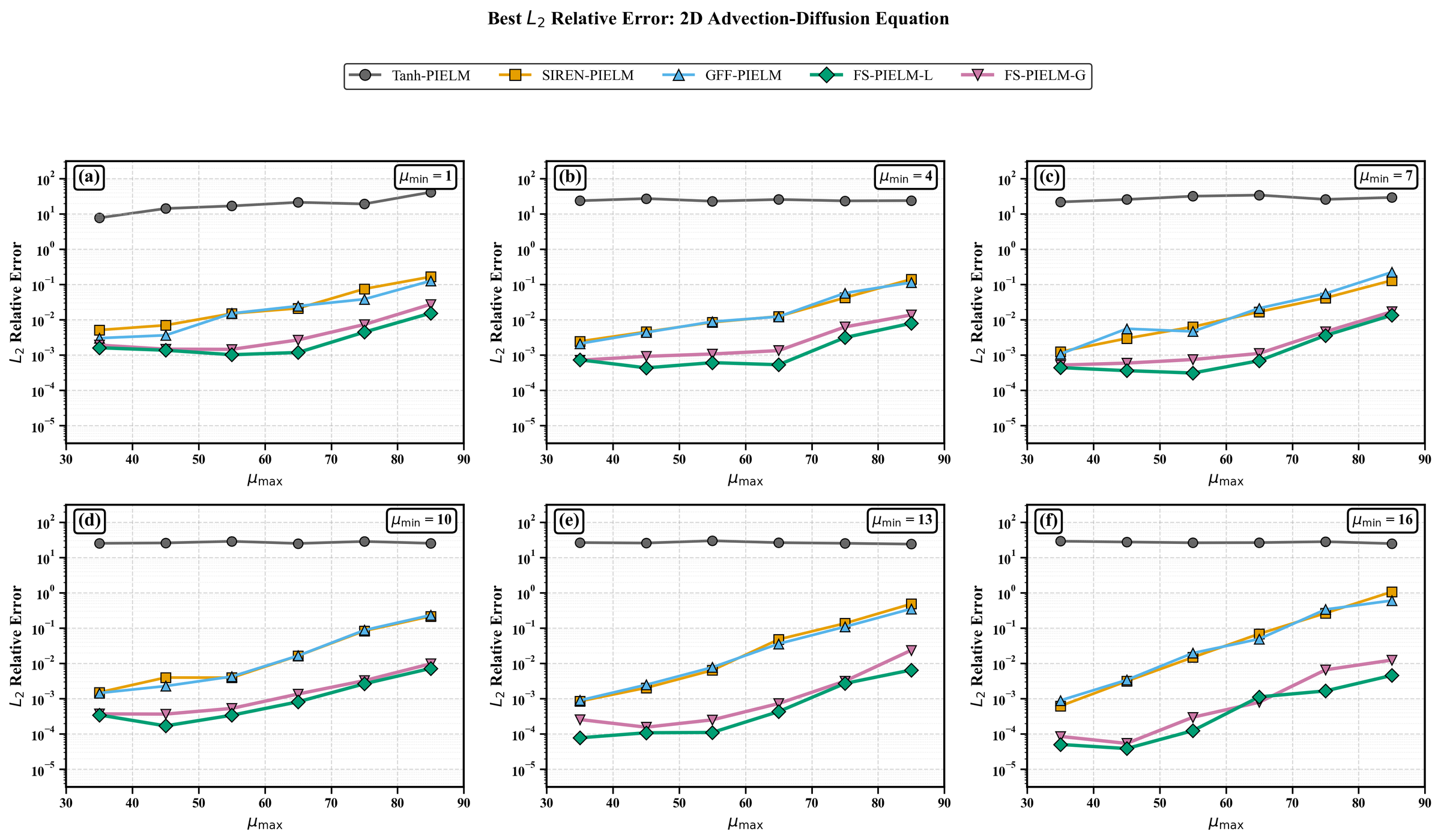}
    \caption{Best relative $L_2$ error versus $\mu_{\max}$ at each method's optimal $\mu_{\min}$; format as in Fig.~\ref{fig:case1_lineplot}.}
    \label{fig:case6_lineplot}
\end{subfigure}
\caption{\textbf{2D advection-diffusion equation (Pacman domain):} Comparison of five PIELM methods on a Pacman-shaped domain (circle with wedge removed, center $(0.5, 0.5)$, radius $0.4$). Spatial frequencies: $3\pi$ in $x$ and $10\pi$ in $y$, with advection velocity $(4, 4)$. Network configuration: $M = 5000$ neurons, $N_C = 8000$ interior points, $N_B = 600$ boundary points, $N_I = 800$ initial condition points.}
\label{fig:case6_analysis}
\end{figure}

The heatmaps (Fig.~\ref{fig:case6_heatmap}) show that SIREN-PIELM and GFF-PIELM achieve acceptable accuracy only in a narrow region near $\mu_{\min}=16$ and $\mu_{\max}\leq55$, while FS-PIELM-L extends its optimal region to higher $\mu_{\max}$ values. The higher error floor compared with rectangular-domain cases reflects the added difficulty of boundary conformity on the Pacman geometry: the wedge opening creates re-entrant corners where the solution gradient changes abruptly, and global cosine basis functions---which are smooth and periodic---must superpose a large number of terms to approximate such localized features. The lineplot (Fig.~\ref{fig:case6_lineplot}) shows that FS-PIELM-L maintains a consistent one-order-of-magnitude advantage over SIREN-PIELM and GFF-PIELM across $\mu_{\max}$ values from 35 to 85, suggesting that the mean-shift advantage persists even when geometric complexity, rather than spectral bias alone, dominates the error.

\subsection{Helmholtz Equation on Panda-Shaped Domain}
\label{sec:case7}

The final benchmark imposes both geometric complexity and broad spectral content simultaneously. The Panda-shaped domain is non-convex with multiple curvature changes, and the solution contains four distinct frequency components spanning a 7.5:1 ratio---the most demanding combination in this study. We solve:
\begin{align}
\frac{\partial^2 u}{\partial x^2} + \frac{\partial^2 u}{\partial y^2} + u &= f(x,y), \quad (x,y) \in \Omega_{\text{Panda}}, \label{eq:helmholtzpanda}\\
u(x,y) &= g(x,y), \quad (x,y) \in \partial\Omega_{\text{Panda}},
\end{align}
where $\Omega_{\text{Panda}}$ is a panda-head-shaped domain constructed using a polar representation with modulated radius incorporating ear bumps and facial features.

The exact solution exhibits multi-scale oscillatory behavior in the $x$-direction:
\begin{equation}
u_{\text{exact}}(x,y) = \sin(\pi x)\cos(5\pi x) + 0.5\sin(10\pi x)\cos(20\pi x).
\end{equation}
Using the product-to-sum identity $\sin(a)\cos(b) = \frac{1}{2}[\sin(a+b) + \sin(a-b)]$, this solution can be decomposed into four frequency components:
\begin{equation}
u_{\text{exact}} = \frac{1}{2}\left[\sin(6\pi x) - \sin(4\pi x)\right] + \frac{1}{4}\left[\sin(30\pi x) - \sin(10\pi x)\right],
\end{equation}
with key frequencies $4\pi \approx 12.57$, $6\pi \approx 18.85$, $10\pi \approx 31.42$, and $30\pi \approx 94.25$.

The source term is computed analytically from the second derivatives of $u_{\text{exact}}$, noting that $u_{yy} = 0$ since the solution depends only on $x$.

The network uses $M = 5000$ neurons with $N_C = 10000$ interior points sampled within the Panda domain and $N_B = 600$ boundary points. The frequency parameters are $\mu_{\min} \in \{1, 4, 7, 10, 13, 16\}$ and $\mu_{\max} \in \{80, 90, 100, 110, 120, 130\}$.

The Helmholtz-Panda problem produces the largest performance gap among all benchmarks (Table~\ref{tab:case7_results}). SIREN-PIELM and GFF-PIELM reach only $\mathcal{O}(10^{-3})$, while FS-PIELM-L achieves $6.15\times10^{-8}$---a roughly $37{,}000\times$ reduction relative to GFF-PIELM and nearly five orders of magnitude below both baselines. The standard deviation of FS-PIELM-L ($1.46\times10^{-6}$) is $14{,}000\times$ smaller than that of GFF-PIELM ($2.07\times10^{-2}$), indicating that the scaling-based methods produce highly variable errors across seeds, occasionally reaching $\mathcal{O}(10^{-3})$ but more often yielding $\mathcal{O}(10^{-2})$. The wide spectral gap ($4\pi$ to $30\pi$, ratio $\approx7.5:1$) on an irregular domain amplifies the variance penalty of multiplicative scaling in two ways: the high target frequency $30\pi$ inflates the weight variance quadratically, and the non-convex boundary requires the basis functions to maintain accurate phase relationships even near the ears and facial features of the Panda contour---a task that becomes harder when the effective frequencies are spread over a wide, uncontrolled band. The mean-shift mechanism keeps each neuron's effective frequency concentrated near its assigned value (Theorem~\ref{thm:fs_variance}), preserving both spectral accuracy and spatial coherence. FS-PIELM-G reaches $1.59\times10^{-6}$, roughly $26\times$ above FS-PIELM-L, consistent with the pattern that per-neuron frequency assignment provides finer resolution for multi-component solutions. All methods require 8--10\,s of computation.

\begin{table}[!htb]
\centering
\caption{Performance comparison on the Helmholtz equation on Panda-shaped domain. Results show optimal parameters and relative $L_2$ error statistics over 5 random seeds.}
\label{tab:case7_results}
\begin{tabular}{lccccc}
\toprule
Method & $[\mu_{\min}, \mu_{\max}]$ & Best Rel. $L_2$ & Avg Rel. $L_2$ & Std Rel. $L_2$ & Time (s) \\
\midrule
Tanh-PIELM & [13, 60] & $2.45 \times 10^{1}$ & $8.48 \times 10^{1}$ & $3.79 \times 10^{1}$ & 10.34 \\
SIREN-PIELM & [16, 100] & $3.98 \times 10^{-3}$ & $1.63 \times 10^{-2}$ & $9.46 \times 10^{-3}$ & 8.73 \\
GFF-PIELM & [16, 70] & $2.27 \times 10^{-3}$ & $2.67 \times 10^{-2}$ & $2.07 \times 10^{-2}$ & 8.28 \\
FS-PIELM-L & [10, 90] & $\mathbf{6.15 \times 10^{-8}}$ & $1.76 \times 10^{-6}$ & $1.46 \times 10^{-6}$ & 9.81 \\
FS-PIELM-G & [13, 90] & $1.59 \times 10^{-6}$ & $1.29 \times 10^{-5}$ & $1.59 \times 10^{-5}$ & 8.32 \\
\bottomrule
\end{tabular}
\end{table}

\begin{figure}[!htb]
\centering
\begin{subfigure}[b]{0.8\textwidth}
    \centering
    \includegraphics[width=\textwidth]{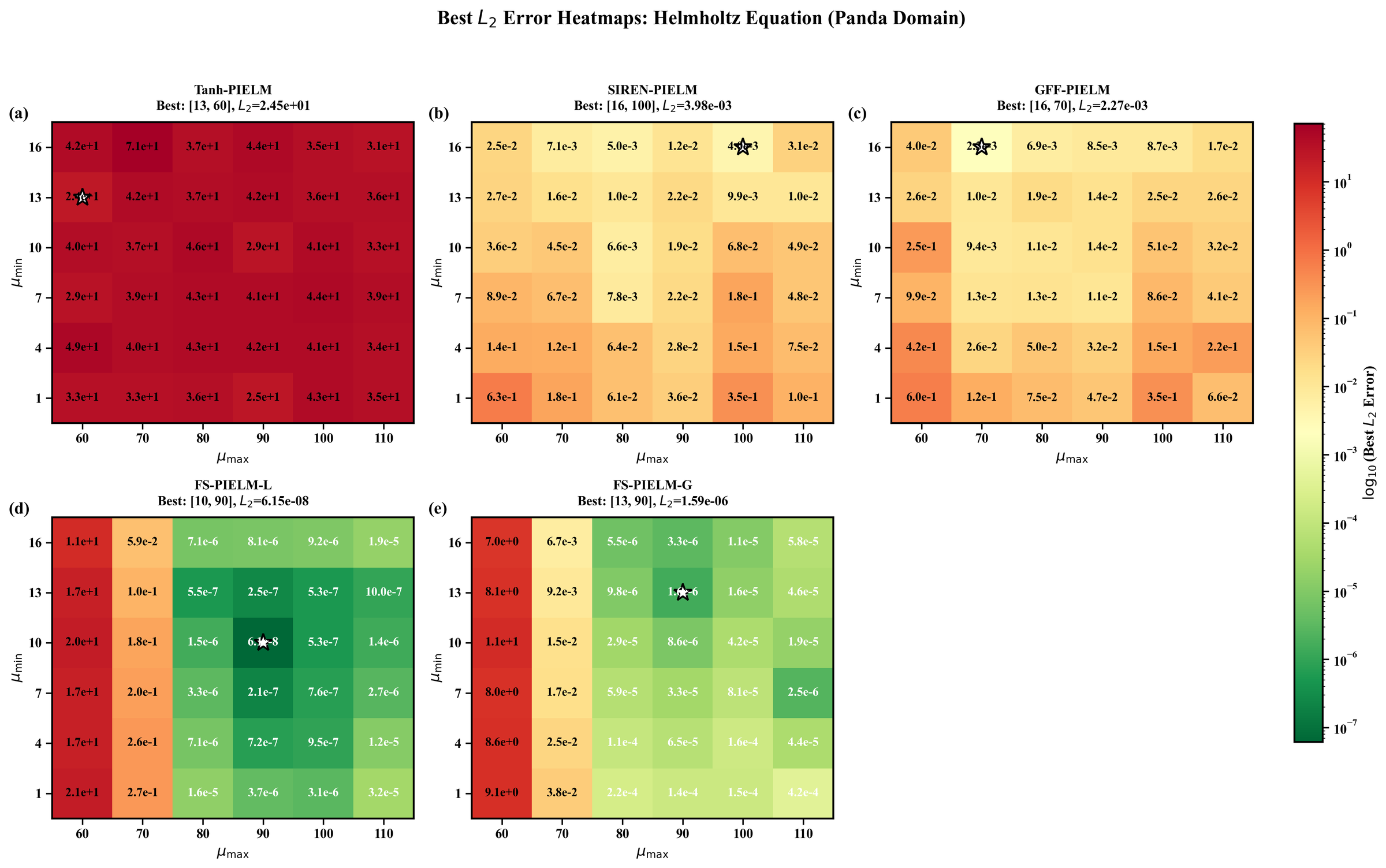}
    \caption{Best relative $L_2$ error as a function of $(\mu_{\min}, \mu_{\max})$; format as in Fig.~\ref{fig:case1_heatmap}.}
    \label{fig:case7_heatmap}
\end{subfigure}
\\[0.5em]
\begin{subfigure}[b]{0.8\textwidth}
    \centering
    \includegraphics[width=\textwidth]{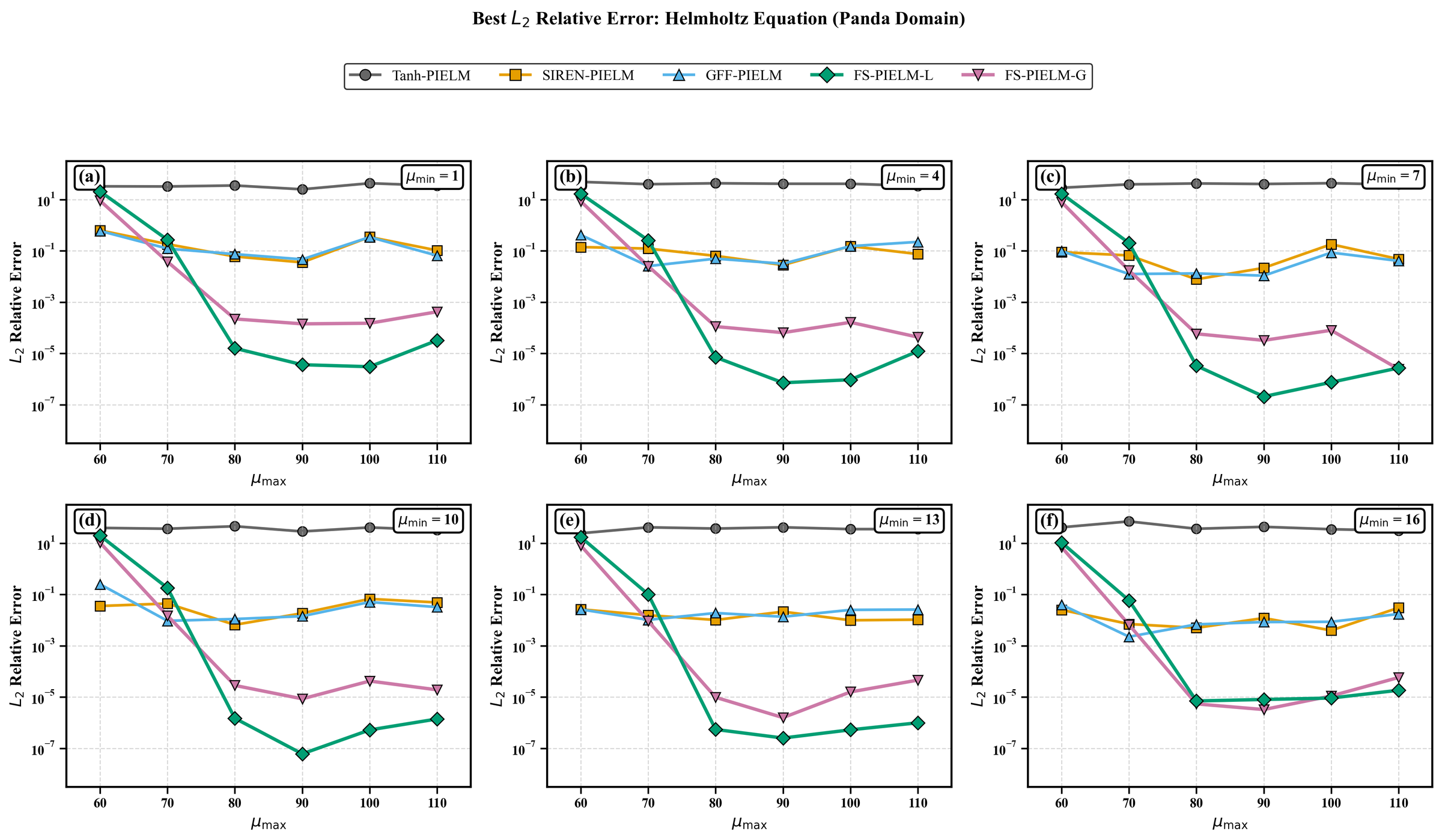}
    \caption{Best relative $L_2$ error versus $\mu_{\max}$ at each method's optimal $\mu_{\min}$; format as in Fig.~\ref{fig:case1_lineplot}.}
    \label{fig:case7_lineplot}
\end{subfigure}
\caption{\textbf{Helmholtz equation (Panda domain):} Comparison of five PIELM methods with multi-scale frequency content (key frequencies: $4\pi$, $6\pi$, $10\pi$, $30\pi$). The solution exhibits pure $x$-dependence with complex oscillatory behavior. Network configuration: $M = 5000$ neurons, $N_C = 10000$ interior points, $N_B = 600$ boundary points.}
\label{fig:case7_analysis}
\end{figure}

The heatmaps (Fig.~\ref{fig:case7_heatmap}) illustrate the contrast: SIREN-PIELM and GFF-PIELM produce errors no better than $10^{-3}$ even at their optimal parameters, while FS-PIELM-L maintains errors below $10^{-6}$ across a wide region of $(\mu_{\min},\mu_{\max})$. The lineplot (Fig.~\ref{fig:case7_lineplot}) shows that FS-PIELM-L improves steadily with $\mu_{\max}$ and stabilizes near $10^{-8}$ for $\mu_{\max}\geq 90$, which corresponds roughly to $30\pi\approx94$---the highest frequency component in the solution. The combination of an irregular boundary with four distinct frequency components ($4\pi$ through $30\pi$) amplifies both the variance penalty and the directional sensitivity of scaling-based weights, making this the scenario where the mean-shift mechanism provides its greatest advantage. The roughly $37{,}000\times$ improvement factor observed here is the largest across all seven benchmarks, underscoring that the mean-shift mechanism yields its largest gains precisely when spectral demands are most stringent---whether through high absolute frequencies, wide frequency ratios, or geometric complexity that compounds the approximation challenge.

\subsection{Comprehensive Results and Analysis}
\label{sec:results}

Table~\ref{tab:best_l2_all} summarizes the best achieved relative $L_2$ errors for all methods across the seven test cases. FS-PIELM-L achieves the lowest error in six out of seven cases, with improvement factors over GFF-PIELM ranging from $19\times$ (the Klein-Gordon equation) to roughly $37{,}000\times$ (the Helmholtz-Panda problem). The sole exception is the heat equation, where GFF-PIELM leads marginally. Tanh-PIELM fails across all cases with errors exceeding $10^{-2}$, as expected for a non-periodic activation applied to oscillatory solutions. The largest gains arise in problems with wide frequency ratios (the Poisson equation: $8{,}977\times$) or combined geometric-spectral complexity (the Helmholtz-Panda problem), where adaptive frequency coverage is most beneficial.

\begin{table}[!htb]
\centering
\caption{Best relative $L_2$ errors for all methods across seven test cases. Bold values indicate the best performance for each case. Each result represents the minimum error over 36 frequency parameter combinations and 5 random seeds.}
\label{tab:best_l2_all}
\begin{tabular}{lccccc}
\toprule
Case & Tanh-PIELM & SIREN-PIELM & GFF-PIELM & FS-PIELM-L & FS-PIELM-G \\
\midrule
Helmholtz 2D & $1.81 \times 10^{4}$ & $5.13 \times 10^{-5}$ & $8.72 \times 10^{-5}$ & $\mathbf{1.37 \times 10^{-8}}$ & $1.07 \times 10^{-7}$ \\
Wave 1D & $5.99 \times 10^{4}$ & $1.24 \times 10^{-6}$ & $2.76 \times 10^{-6}$ & $\mathbf{2.00 \times 10^{-8}}$ & $6.74 \times 10^{-8}$ \\
Poisson 1D & $8.36 \times 10^{4}$ & $6.00 \times 10^{-8}$ & $7.38 \times 10^{-8}$ & $\mathbf{8.22 \times 10^{-12}}$ & $5.88 \times 10^{-10}$ \\
Klein-Gordon & $4.80 \times 10^{3}$ & $1.78 \times 10^{-7}$ & $9.53 \times 10^{-8}$ & $\mathbf{4.95 \times 10^{-9}}$ & $1.01 \times 10^{-7}$ \\
Heat (Multi-freq) & $8.29 \times 10^{-2}$ & $1.84 \times 10^{-9}$ & $\mathbf{1.41 \times 10^{-9}}$ & $5.56 \times 10^{-9}$ & $6.58 \times 10^{-9}$ \\
Advection-Diffusion 2D & $7.79 \times 10^{0}$ & $6.21 \times 10^{-4}$ & $8.96 \times 10^{-4}$ & $\mathbf{3.87 \times 10^{-5}}$ & $5.42 \times 10^{-5}$ \\
Helmholtz Panda & $2.45 \times 10^{1}$ & $3.98 \times 10^{-3}$ & $2.27 \times 10^{-3}$ & $\mathbf{6.15 \times 10^{-8}}$ & $1.59 \times 10^{-6}$ \\
\bottomrule
\end{tabular}
\end{table}

Table~\ref{tab:optimal_params} presents the optimal frequency range parameters $[\mu_{\min}, \mu_{\max}]$ for each method. FS-PIELM-L operates effectively with lower $\mu_{\min}$ values (often 5--16) than SIREN-PIELM and GFF-PIELM (typically 16--30), suggesting that the frequency shift mechanism compensates for suboptimal base frequency ranges. The optimal $\mu_{\max}$ scales approximately with the highest target frequency in each problem, ranging from 70--110 for moderate-frequency cases to 250--350 for the multi-scale Poisson equation. Across methods, FS-PIELM-L attains its best results with diverse $[\mu_{\min}, \mu_{\max}]$ combinations, whereas competing methods require more precise parameter tuning.

\begin{table}[!htb]
\centering
\caption{Optimal frequency range parameters $[\mu_{\min}, \mu_{\max}]$ for each method achieving the best relative $L_2$ error.}
\label{tab:optimal_params}
\begin{tabular}{lccccc}
\toprule
Case & Tanh-PIELM & SIREN-PIELM & GFF-PIELM & FS-PIELM-L & FS-PIELM-G \\
\midrule
Helmholtz 2D & [5, 80] & [30, 120] & [30, 120] & [30, 140] & [5, 120] \\
Wave 1D & [4, 60] & [13, 100] & [16, 100] & [16, 110] & [16, 110] \\
Poisson 1D & [15, 200] & [25, 350] & [25, 350] & [5, 250] & [15, 300] \\
Klein-Gordon & [5, 50] & [20, 100] & [25, 100] & [10, 100] & [10, 100] \\
Heat (Multi-freq) & [5, 50] & [30, 70] & [30, 70] & [5, 80] & [10, 90] \\
Advection-Diffusion 2D & [1, 35] & [16, 35] & [16, 35] & [16, 45] & [16, 45] \\
Helmholtz Panda & [13, 60] & [16, 100] & [16, 70] & [10, 90] & [13, 90] \\
\bottomrule
\end{tabular}
\end{table}

Table~\ref{tab:detailed_stats} provides average error and standard deviation across random seeds for the FS-PIELM variants at their optimal parameters. Across all seven cases, FS-PIELM-L consistently achieves lower average errors and smaller standard deviations than FS-PIELM-G, with typical improvement factors of 2--$10\times$ in both metrics. The coefficient of variation (standard deviation divided by mean) remains below 1.0 for FS-PIELM-L in most cases, indicating reliable results without excessive sensitivity to random initialization. Both FS-PIELM variants maintain computation times comparable to baseline methods, with no measurable overhead from the frequency shift sampling.

\begin{table}[!htb]
\centering
\caption{Detailed performance statistics for FS-PIELM variants at optimal parameters. Results show mean $\pm$ standard deviation over 5 random seeds.}
\label{tab:detailed_stats}
\begin{tabular}{lcccc}
\toprule
Case & \multicolumn{2}{c}{FS-PIELM-L} & \multicolumn{2}{c}{FS-PIELM-G} \\
\cmidrule(lr){2-3} \cmidrule(lr){4-5}
 & Rel. $L_2$ Error (avg $\pm$ std) & Time (s) & Rel. $L_2$ Error (avg $\pm$ std) & Time (s) \\
\midrule
Helmholtz 2D & $(1.05 \pm 1.01) \times 10^{-7}$ & 16.45 & $(4.14 \pm 7.14) \times 10^{-6}$ & 14.05 \\
Wave 1D & $(9.36 \pm 5.37) \times 10^{-8}$ & 6.90 & $(2.59 \pm 2.26) \times 10^{-7}$ & 6.83 \\
Poisson 1D & $(2.25 \pm 1.13) \times 10^{-11}$ & 0.01 & $(1.50 \pm 1.71) \times 10^{-8}$ & 0.01 \\
Klein-Gordon & $(2.92 \pm 2.09) \times 10^{-8}$ & 7.19 & $(2.14 \pm 0.75) \times 10^{-7}$ & 7.23 \\
Heat (Multi-freq) & $(2.47 \pm 1.64) \times 10^{-8}$ & 0.48 & $(2.94 \pm 2.08) \times 10^{-8}$ & 0.51 \\
Advection-Diffusion 2D & $(1.21 \pm 0.60) \times 10^{-4}$ & 11.77 & $(2.47 \pm 2.41) \times 10^{-4}$ & 11.80 \\
Helmholtz Panda & $(1.76 \pm 1.46) \times 10^{-6}$ & 9.81 & $(1.29 \pm 1.59) \times 10^{-5}$ & 8.32 \\
\bottomrule
\end{tabular}
\end{table}

Across the seven benchmarks, a consistent pattern emerges: the improvement of FS-PIELM-L over GFF-PIELM correlates with the spectral complexity of the problem. For moderate frequency ratios (the Klein-Gordon equation, ratio $6$:$1$), the gain is modest ($19\times$), and for the heat equation (ratio $4$:$1$) where diffusion suppresses high-frequency content, no gain is observed. For wide ratios or high absolute frequencies (the Poisson equation, ratio $15$:$1$; the Helmholtz-Panda problem, frequencies up to $30\pi$ on an irregular domain), the gain exceeds three to four orders of magnitude. This trend is consistent with the theoretical prediction of Theorem~\ref{thm:fs_variance}: the variance penalty of scaling grows quadratically with the target frequency (Theorem~\ref{thm:scaling_stats}), so the advantage of mean-shifted weights becomes increasingly pronounced as spectral demands increase.

\section{Conclusions}
\label{sec:conclusions}

We have presented FS-PIELM, a framework that controls the spectral content of an extreme learning machine through additive mean translation of the weight sampling distribution. Because only the distribution mean moves while the covariance remains fixed, each neuron's effective frequency concentrates tightly around its assigned target---in contrast to the spreading effect of multiplicative scaling (Corollary~\ref{cor:variance_comparison}).

Two variants---FS-PIELM-L (per-neuron frequencies) and FS-PIELM-G (grouped frequencies)---have been evaluated on seven benchmarks spanning elliptic, parabolic, and hyperbolic PDEs on both rectangular and irregular domains. FS-PIELM-L achieves the lowest error in six of seven cases, with improvements over GFF-PIELM ranging from roughly $20\times$ to $37{,}000\times$. The gains are largest when the problem involves a wide frequency ratio or complex geometry, consistent with the theoretical variance analysis. The sole exception is the heat equation, where diffusion smooths high-frequency content and all periodic-activation methods perform comparably. Both variants retain the single-linear-solve efficiency of the ELM framework.

Three open questions remain. First, the frequency bounds $\mu_{\min}$ and $\mu_{\max}$ must currently be prescribed; an adaptive procedure that infers suitable bounds from the PDE structure would broaden practical applicability. Second, extension to nonlinear PDEs---where the ELM framework requires iterative solvers---introduces coupling between the frequency parameterization and the nonlinear iteration that merits dedicated study. Third, rigorous convergence rates and approximation-theoretic error bounds for the frequency-shifted architecture have yet to be established.

\section*{Acknowledgments}

This work was supported by the MIIT Key Laboratory of Dynamics and Control of Complex Systems at Northwestern Polytechnical University.

\noindent\textbf{Declaration of Generative AI and AI-Assisted Technologies in the Writing Process}

\noindent During the preparation of this work the authors used Claude (Anthropic) in order to improve language and readability. After using this tool, the authors reviewed and edited the content as needed and take full responsibility for the content of the publication.

\end{document}